%% file: main.tex
\documentclass[10pt,journal,compsoc]{IEEEtran}

\usepackage{enumitem}
\usepackage{xspace}
\usepackage{pgfplots}

\ifCLASSOPTIONcompsoc
  \usepackage[caption=false,font=normalsize,labelfont=sf,textfont=sf]{subfig}
\else
  \usepackage[caption=false,font=footnotesize]{subfig}
\fi

\usepackage{graphicx}
\usepackage{amsthm}
\usepackage{amsmath, amssymb}
\usepackage{url}
\usepackage{booktabs}

\theoremstyle{plain}
\newtheorem{theorem}{Theorem}[section]
\newtheorem{lemma}[theorem]{Lemma}
\newtheorem{corollary}[theorem]{Corollary}
\newtheorem{assumption}[theorem]{Assumption}
\theoremstyle{definition}
\newtheorem{definition}[theorem]{Definition}

\newcommand{\methodname}{\textit{Gecko}\xspace}

\pgfplotsset{compat=1.18}

\begin{document}

\bstctlcite{IEEE:BSTcontrol}

\title{Gecko: Fast Private Inference via Secure Public Encoder Offloading}

\author{Cheng'an Wei, Kai Chen, Yue Zhao, Congyi Li, and Shenchen Zhu%
\IEEEcompsocitemizethanks{%
\IEEEcompsocthanksitem The authors are with the Institute of Information
Engineering, Chinese Academy of Sciences, China, and the School of Cyber
Security, University of Chinese Academy of Sciences, China
(e-mail: weichengan@iie.ac.cn; chenkai@iie.ac.cn; zhaoyue@iie.ac.cn;
licongyi@iie.ac.cn; zhushenchen@iie.ac.cn). Corresponding author: Kai Chen.}}

\IEEEtitleabstractindextext{%
\begin{abstract}

Private inference protects both user inputs and server models during neural network inference, but existing solutions remain too slow for practical deployment. This motivates recent efforts to run a public encoder, such as a pretrained backbone, outside the protection boundary and evaluate only a small private predictor cryptographically. While appealing for efficiency, this design is not inherently secure: naively offloading a public encoder may create a \emph{feature-space shortcut}: an extraction adversary may learn the remaining private predictor's feature-to-output mapping more easily than the original model's input-to-output behavior.

We present \methodname, designed to limit this additional risk while retaining a compact encrypted predictor. We leverage a frozen backbone that contributes hierarchical features, fixed Fastfood projections that compress them, and private feature gating that prepares them for prediction. We formalize ideal independence and information-preservation conditions as design guidance, then separately evaluate component-reuse extraction attacks. Across image and audio tasks, \methodname achieves 0.4--2.2-second inference with at most 10.8\,MB communication and accuracy comparable to transfer-learning baselines. Under the evaluated attacks, reusing the offloaded public encoder provides no significant advantage to model-extraction adversaries. Source code and a demo are available at \url{https://github.com/CassiniHuy/gecko-infer}.

\end{abstract}

\begin{IEEEkeywords}
Private inference, homomorphic encryption, model extraction, public encoders,
privacy-preserving machine learning.
\end{IEEEkeywords}}

\maketitle

\input{sections/intro}
\input{sections/problem}
\input{sections/overview}
\input{sections/theory}
\input{sections/method}
\input{sections/exps}
\input{sections/related}
\input{sections/discon}

\bibliographystyle{IEEEtran}
\bibliography{refs}

\end{document}

%% file: sections/intro.tex
\section{Introduction}

Deep learning inference is increasingly offered as a remote service for sensitive applications such as biomedical image classification~\cite{medmnistv1,medmnistv2}. Private inference aims to protect both parties in this setting: the server should not learn the user's input or output, and the user should not learn the server's proprietary model beyond the intended prediction. Fully Homomorphic Encryption (FHE) provides this confidentiality by allowing the server to compute directly on ciphertexts~\cite{cryptoNets16icml}.

The main obstacle is cost. Neural-network operations are far more expensive under encryption than in plaintext, and the feature-extraction backbone usually dominates modern inference. Even optimized systems still require about one minute for a ResNet50 image~\cite{ngSoKCryptographicNeuralNetwork2023,he2024RhombusFastHomomorphic,feng2025PantherPracticalSecure,liu2025AntelopeFastSecure}. Public-encoder offloading offers a different optimization: execute a generic pretrained encoder outside FHE and protect only a lightweight task-specific predictor~\cite{brutzkusLowLatencyPrivacy2019,ruanPrivateEfficientAccurate2023,xia2026CryptPEFTEfficientPrivate}. This follows a \emph{public-resource assumption}: the offloaded encoder is independently available and contains no proprietary task adaptation.

Consider a chest-X-ray service. The user can run a frozen public image encoder locally, but its representation is not a pneumonia diagnosis; the service-specific mapping remains in the server's proprietary predictor. The user therefore encrypts the public representation, the server evaluates the private predictor under FHE, and the user decrypts the returned result. Offloading eliminates the need to protect the expensive generic encoder without eliminating the server's proprietary contribution.

\noindent\textbf{Feature-space shortcut problem.}
Ordinary prediction access already entails model-extraction risk: an adversary can query class probabilities and train a transfer-learning surrogate~\cite{orekondy2019knockoff}. While we cannot eliminate this baseline risk, we should not introduce incremental risk through offloading. A naive split commonly exposes the exact final-layer representation consumed by the private predictor. Because deep features are compact and selective~\cite{zivOpeningBlackBox2017}, this interface may reveal a narrow task-aligned feature space and reduce extraction from learning the full input--output behavior to learning a simpler feature-to-output mapping. We call this additional simplification the \emph{feature-space shortcut}.

The basic remedy is to enrich the public representation, but doing so introduces two challenges. First, the public representation should be richer without becoming task-adapted or forcing a large private predictor. Second, representation richness needs a principled assessment: task accuracy alone indicates utility but not the feature richness the public representation has.

\noindent\textbf{Approach.}
We propose \methodname, a private-inference framework designed to limit the additional shortcut created by public encoding. A frozen public backbone contributes features from multiple abstraction levels, retaining low-, mid-, and high-level characteristics rather than only its selective final representation. Fixed Fastfood-based random projections~\cite{le2013FastfoodApproximatingKernel} compress each level, and a private hierarchical feature-gating layer balances the projected groups before being fused into a lightweight predictor. Only this task-specific predictor is evaluated under FHE.

We also formulate two ideal design criteria: \emph{prediction-artifact independence}, which excludes task-specific private artifacts from the public component, and \emph{information preservation}, which prevents the exposed interfaces from collapsing the inference behavior. These criteria characterize prediction-functional uncertainty, which, while not a direct measure of model extraction hardness, helps us gauge it. In practice, we use estimated mutual information as an offline, relative diagnostic of representation richness and then empirically evaluate extraction risk by measuring the difference in attack accuracy after offloading.

Across general-image, biomedical, and audio tasks~\cite{krizhevsky2009learning,deng2012mnist,parkhi2012cats,lecun1998gradient,helber2019eurosat,zohar_jackson_2018_1342401,medmnistv2}, \methodname completes inference in 0.4--2.2\,seconds with at most 10.8\,MB communication while maintaining accuracy comparable to transfer-learning baselines. Under our evaluated attacks, direct reuse of the public encoder and projection offered no consistent extraction advantage over free surrogate tuning. Our contributions are:

\begin{itemize}[leftmargin=*]
    \item We develop a private-inference partition that executes an independently public encoder outside FHE while retaining the task-specific predictor under FHE, substantially reducing protected computation.
    \item We construct a compact hierarchical representation using multi-level features, fixed Fastfood projections, and private feature gating, so the encrypted predictor remains lightweight without relying on a selective final-layer interface.
    \item We formalize ideal information-theoretic criteria for this partition and evaluate its efficiency, utility, and incremental extraction risk across realistic tasks and component-reuse attack settings.
\end{itemize}

The source code is available at \url{https://github.com/CassiniHuy/gecko-infer}.

%% file: sections/problem.tex
\section{Problem Setting and Challenges}\label{sec:problem-setting}

\subsection{Private Inference with Public Encoders}\label{subsec:pi-setting}

We consider a client--server classification service. The user holds a private input, and the server holds a proprietary task-specific predictor. Conventional private inference evaluates the complete model on encrypted inputs, but protecting a large generic feature extractor dominates the cost. As Figure~\ref{fig:problem-setting} illustrates, public-encoder offloading instead lets the user execute an independently available encoder in plaintext, encrypt its representation, and send only that ciphertext to the proprietary predictor. The encoder cannot produce the service output by itself: it supplies generic features but contains no private task-specific mapping.

This partition changes the model-protection boundary. The public encoder needs no model-side confidentiality, whereas the predictor weights and internal state remain private. Our question is therefore whether exposing the encoder and its representation gives an adversary \emph{additional} extraction leverage beyond normal prediction access.

\subsection{Threat Model}\label{subsec:threat-model}

Following prior private-inference work~\cite{usenix22cheetah,kimOptimizedPrivacyPreservingCNN2023,brutzkusLowLatencyPrivacy2019,mishraDelphiCryptographicInference2020,singhHyenaBalancingPacking2024,yang2025RBOOTAcceleratingHomomorphic}, we assume two semi-honest parties without a trusted third party. Both follow the prescribed protocol while attempting to infer additional information.

\begin{itemize}[leftmargin=*]
    \item The \textit{curious server} receives only encrypted representations and returns encrypted predictions. It does not hold the decryption key and should learn neither the user's input nor output; this protection follows from the underlying FHE scheme.
    \item The \textit{model-extraction adversary} controls the user side and may choose valid raw inputs, submit their prescribed encodings, decrypt the returned results, and train a surrogate from the intended class-probability outputs. We assume it knows the protocol, predictor architecture, public backbone, projection, and all other public components, but not the private predictor parameters or server state.
\end{itemize}

Because the setting is semi-honest, every submitted representation is assumed to be generated by the prescribed public encoder from a valid raw input. Chosen-latent queries, malformed ciphertexts, and other protocol deviations are outside scope. Likewise, the model-extraction adversary's access to class probabilities is an intended API disclosure, not a breach.

\noindent\textit{Security goal: avoiding additional model-side exposure.}
Any prediction API may be replicated through ordinary black-box queries. Our goal is to ensure that \methodname's offloaded public components do not create an easier feature-space route than that baseline without offloading. We assess this incremental risk under the specific attacks in Section~\ref{subsec:sec-eval}.

\begin{figure}[bt]
    \centering
    \includegraphics[width=0.98\linewidth]{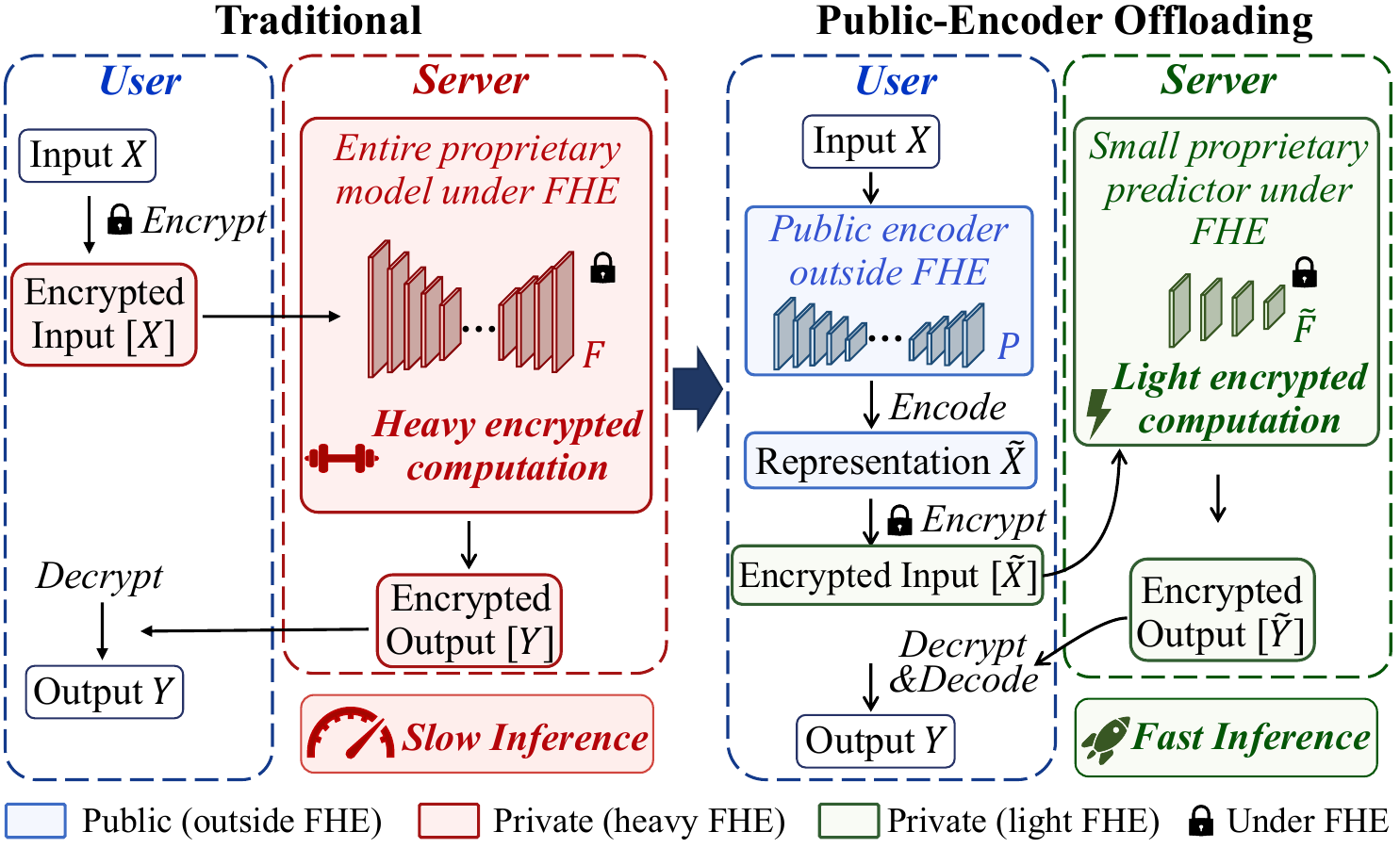}
    \caption{Traditional private inference protects the complete model under FHE. Public-encoder offloading executes an independently public encoder outside FHE and protects the task-specific predictor. The user still needs the predictor to obtain $Y$; \methodname is designed to limit additional model-side exposure caused by this partition.}
    \label{fig:problem-setting}
\end{figure}

\subsection{Risks of Naive Public-Encoder Offloading}\label{sec:why-fail}

\noindent\textbf{Exposing proprietary artifacts.}
One approach moves part of a proprietary model outside protection, exposing partial weights~\cite{mo2020DarkneTZModelPrivacy,shen2022SOTERGuardingBlackbox}, selected neurons~\cite{renPrivDNNSecureMultiParty2024}, intermediate activations~\cite{chenTHEXPrivacyPreservingTransformer2022}, or activation statistics~\cite{yan2025CometAcceleratingPrivate,scope2025ccs}. These signals come from the private model and may directly aid surrogate training. \methodname instead offloads an independently available pretrained encoder (e.g., ResNet~\cite{he2016deep}) and protects the entire task-specific predictor. This public-resource assumption prevents direct disclosure of proprietary artifacts, but it does not by itself prevent the exposed representation from simplifying extraction.

\noindent\textbf{Feature-space shortcut.}
Prior public-encoder designs commonly place a shallow private predictor on the final backbone representation~\cite{brutzkusLowLatencyPrivacy2019,ruanPrivateEfficientAccurate2023}. As Figure~\ref{fig:failure-encoder} illustrates, this exact interface may expose a compact feature space that has already discarded much input variation and retained prediction-relevant cues. An adversary can still ignore this interface and conduct an ordinary raw-input attack; the incremental concern is that reusing it may reduce the search to a simpler mapping from the exposed features to outputs. This is the \emph{feature-space shortcut}.

\begin{figure}[tb]
    \centering
    \includegraphics[width=0.98\linewidth]{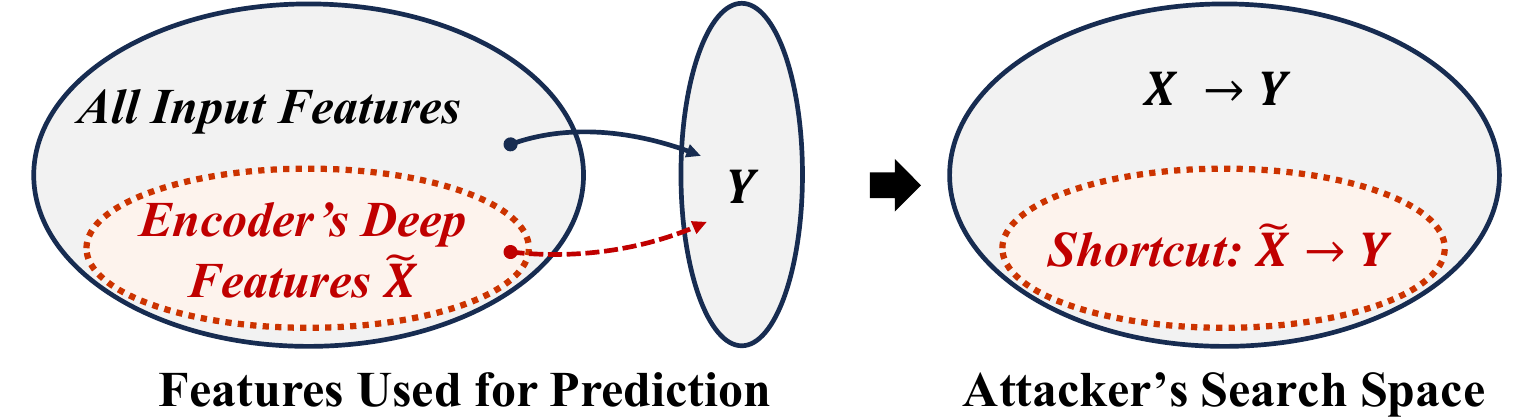}
    \caption{A feature-space shortcut. Naive final-layer offloading may expose a selective representation $\tilde{X}$, giving an extraction adversary an additional feature-to-output route beyond ordinary learning of $X\to Y$.}
    \label{fig:failure-encoder}
\end{figure}

Section~\ref{sec:criteria-formalization} characterizes the corresponding reduction in prediction-functional uncertainty, motivating the empirical extraction analysis. Section~\ref{subsec:sec-ablation} then compares this naive interface with \methodname at similar representation dimensions. The design challenge of \methodname is therefore to preserve richer input information without adapting the public encoder to the task or making the encrypted predictor too costly.

%% file: sections/overview.tex
\section{Gecko Overview and Design Principles}
\label{sec:gecko-overview}
\label{sec:criteria-formalization}

\subsection{Workflow Overview}

\methodname combines three design choices. First, a frozen, task-agnostic backbone supplies features from multiple levels rather than only its selective final layer. Second, fixed Fastfood projections compress those public features, while Hierarchical Feature Gating (HFG) privately balances their contributions without adding FHE depth after fusion. Third, ideal information-theoretic criteria guide the public/private partition, and estimated mutual information provides an offline diagnostic of input-representation richness.

\begin{figure}[tb]
    \centering
    \includegraphics[width=0.98\linewidth]{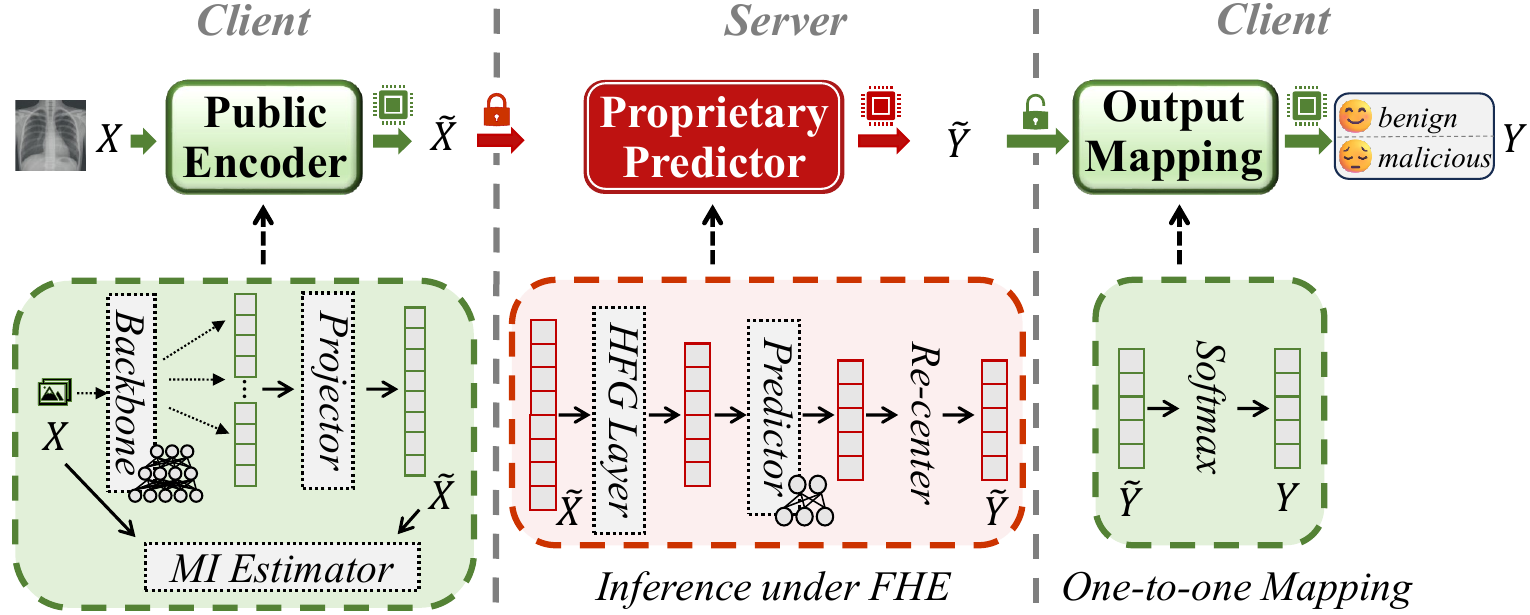}
    \caption{\methodname workflow. Online inference follows $X\to$ frozen multi-level backbone and fixed projections $\to\tilde{X}\to$ encryption $\to$ proprietary predictor under FHE $\to$ centered logits $\tilde{Y}\to$ decryption and softmax $\to Y$. HFG is private and fused into the predictor; the MI estimator runs offline during representation design.}
    \label{fig:gecko-streamline}
\end{figure}

\noindent\textbf{End-to-end inference.}
Figure~\ref{fig:gecko-streamline} separates the online workflow into three stages:
\begin{itemize}[leftmargin=*]
    \item \textit{Public encoding.} The user applies the frozen multi-level backbone and fixed projections in plaintext to obtain $\tilde{X}$.
    \item \textit{Proprietary prediction.} The user encrypts $\tilde{X}$, and the server evaluates the task-specific predictor under FHE to produce encrypted centered logits $\tilde{Y}$.
    \item \textit{Output reconstruction.} The user decrypts $\tilde{Y}$ and applies the fixed softmax mapping to recover the intended probabilities $Y$.
\end{itemize}

The public encoder contains no task-adapted parameters, whereas HFG and the predictor are learned for the service and remain private. The following formalization develops ideal conditions for this separation; Sections~\ref{subsec:sec-eval} and~\ref{subsec:sec-ablation} separately evaluate whether the public components help the tested extraction attacks.

%% file: sections/theory.tex
\subsection{Formal Setup}\label{subsec:problem-formulation}

We formalize the public component $P$ as side information and characterize when observing it preserves uncertainty about prediction-relevant functionality. The resulting criteria guide \methodname's public/private partition, while the extraction experiments evaluate the corresponding behavior under concrete learning attacks.

Let $F$ be a random prediction functionality from task input $X$ to output $Y$, and let $P$ be public side information. Raw inputs are sampled independently of the deployed functionality and public component, $X\perp(F,P)$, and $Y$ is determined by the prediction channel once $(X,F)$ are given. \methodname decomposes this workflow as $X\rightarrow\tilde{X}\rightarrow\tilde{Y}\rightarrow Y$, where $\tilde{F}$ denotes the protected mapping $\tilde{X}\to\tilde{Y}$. Table~\ref{tab:notations} summarizes the notation.

\input{data/notations}

\begin{definition}[\textit{Prediction-relevant functionality}]
\label{def:target-model}
Let $G$ be a random functionality with distribution $\mu$ over $\operatorname{supp}(G)$. For input $U$ and output $V$, it induces
\[
\Pr_{\mu}(v\mid u)=\sum_{g\in\operatorname{supp}(G)}
\Pr(v\mid u,g)\mu(g).
\]
We call $G$ prediction-relevant when the map $\mu\mapsto\Pr_{\mu}(\cdot\mid\cdot)$ is injective. Functionally inert parameter differences that induce the same prediction behavior are therefore treated as equivalent.
\end{definition}

\begin{lemma}[\textit{Relevance Equivalence}]
\label{lemma:relevance-equal}
Let $G$ be prediction-relevant for input $U$ and output $V$. Assume
$U\perp G$, $U\perp G\mid P$, and $V\perp P\mid(U,G)$. Then
\[
I(G;P)=0 \quad\Longleftrightarrow\quad I(V;P\mid U)=0.
\]
\end{lemma}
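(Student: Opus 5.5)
We prove the two directions separately. Both rest on one identity: for each value $p$ of $P$, the conditional predictive channel equals the channel induced by the posterior law of $G$. Writing $\mu_p(g)=\Pr(G=g\mid P=p)$, the assumptions $V\perp P\mid(U,G)$ and $U\perp G\mid P$ give
\[
\Pr(v\mid u,p)=\sum_{g}\Pr(v\mid u,g,p)\Pr(g\mid u,p)=\sum_g \Pr(v\mid u,g)\,\mu_p(g)=\Pr_{\mu_p}(v\mid u).
\]
Likewise, the marginal channel is $\Pr(v\mid u)=\Pr_{\mu}(v\mid u)$, where $\mu$ is the prior of $G$; here we use $U\perp G$ to replace $\Pr(g\mid u)$ by $\mu(g)$. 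We take $\Pr(v\mid u,g)$ to be the fixed prediction channel of Definition~\ref{def:target-model}, so it does not depend on $p$.

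\emph{($\Rightarrow$).} If $I(G;P)=0$, then $\mu_p=\mu$ for $P$-almost every $p$. The identity then gives $\Pr(v\mid u,p)=\Pr(v\mid u)$ for almost every $(u,p)$, so $V\perp P\mid U$ and $I(V;P\mid U)=0$. This direction does not need prediction-relevance.

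\emph{($\Leftarrow$).} If $I(V;P\mid U)=0$, then $\Pr(v\mid u,p)=\Pr(v\mid u)$ for almost every $(u,p)$ with positive mass. By the identity, $\Pr_{\mu_p}(\cdot\mid\cdot)=\Pr_{\mu}(\cdot\mid\cdot)$ as channels. Injectivity of $\mu\mapsto\Pr_\mu(\cdot\mid\cdot)$ then yields $\mu_p=\mu$ for almost every $p$. Hence $G\perp P$ and $I(G;P)=0$.

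The main obstacle is measure-theoretic: the equality of channels must hold on the support of $U$ for each $p$. This is exactly why the hypothesis $U\perp G\mid P$ is needed together with $U\perp P$, which follows from the first two hypotheses and ensures that the support of $U$ given $p$ matches its marginal support. With that in hand, the two channels agree wherever injectivity is defined, and we read injectivity in Definition~\ref{def:target-model} as comparing channels on $\operatorname{supp}(U)$. For the finite/discrete setting we treat, it suffices to carry out these almost-everywhere statements pointwise over positive-probability values.
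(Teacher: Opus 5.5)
Your mixture identity and the ($\Rightarrow$) direction coincide with the paper's argument and are correct. The gap is in ($\Leftarrow$). You claim that $U\perp G$ and $U\perp G\mid P$ imply $U\perp P$, and this is false. Take $G$ constant: both hypotheses then hold for every joint law of $(U,P)$. Without $U\perp P$, the hypothesis $I(V;P\mid U)=0$ only yields $\Pr_{\mu_p}(v\mid u)=\Pr_{\mu}(v\mid u)$ for $u\in\operatorname{supp}(U\mid P=p)$. That set may be a proper subset of $\operatorname{supp}(U)$, and then injectivity of $\mu\mapsto\Pr_\mu(\cdot\mid\cdot)$ on $\operatorname{supp}(U)$ tells you nothing about $\mu_p$.

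The step also cannot be rescued from the stated hypotheses, because the reverse implication itself fails. Here is a counterexample.
\begin{itemize}
\item Let $P\in\{a,b,c\}$ with probabilities $\tfrac14,\tfrac12,\tfrac14$.
\item If $P=a$, set $(U,G)=(0,0)$. If $P=c$, set $(U,G)=(0,1)$. If $P=b$, draw $U$ and $G$ as independent fair bits.
\item Let $V=UG$.
\end{itemize}
Now check the hypotheses. $U\perp G\mid P$ holds by construction. $U\perp G$ holds: the joint law of $(U,G)$ on $(0,0),(0,1),(1,0),(1,1)$ is $\tfrac38,\tfrac38,\tfrac18,\tfrac18$, which factorizes with marginals $(\tfrac34,\tfrac14)$ and $(\tfrac12,\tfrac12)$. $V\perp P\mid(U,G)$ holds because $V$ is a function of $(U,G)$. $G$ is prediction-relevant because $\Pr_\mu(V=1\mid U=1)=\mu(1)$. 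Yet $I(V;P\mid U)=0$: given $U=0$ we have $V\equiv 0$, and given $U=1$ we have $P\equiv b$. Meanwhile $I(G;P)=\tfrac12\ln 2>0$.

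So $U\perp P$ must be added as an explicit hypothesis. Together with $U\perp G\mid P$ it gives $U\perp(G,P)$, and in the paper's instance $(X,F,Y)$ it is supplied by $X\perp(F,P)$. With it, $\operatorname{supp}(U\mid P=p)=\operatorname{supp}(U)$ for every $p$, and your argument is then complete.

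The paper's own proof has the same hole. Its step ``conditional independence of $V$ and $P$ given $U$ makes the two induced prediction functions equal'' tacitly asserts equality on all of $\operatorname{supp}(U)$. You located the right obstacle; it has to be removed by an extra assumption rather than derived from the existing ones.
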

\begin{proof}
For each $p$ in the positive-probability support, the two input-independence assumptions and prediction-channel assumption give
\begin{align*}
\Pr(v\mid u,p)&=\sum_g\Pr(v\mid u,g)\Pr(g\mid p),\\
\Pr(v\mid u)&=\sum_g\Pr(v\mid u,g)\Pr(g).
\end{align*}
If $I(G;P)=0$, the mixing distributions are equal, implying
$I(V;P\mid U)=0$. Conversely, conditional independence of $V$ and $P$
given $U$ makes the two induced prediction functions equal. Injectivity in
Definition~\ref{def:target-model} then gives
$\Pr(G\mid P=p)=\Pr(G)$ and hence $I(G;P)=0$.
\end{proof}

For $(U,G,V)=(X,F,Y)$, the assumptions follow from $X\perp(F,P)$ and the prediction channel. Application to the latent workflow requires the corresponding latent-input assumptions stated below.

\subsection{Public-Offloading Criteria}
\label{subsec:criteria-formal}

We first exclude prediction-related artifacts from the public component, then characterize the complementary risk created by a selective latent interface.

\begin{definition}[\textit{Criterion 1: Prediction-Artifact Independence}]
\label{def:independence}
$P$ satisfies prediction-artifact independence if
\[
I(\tilde{Y};P\mid\tilde{X})=0.
\]
\end{definition}

Under the following latent-input conditions, Criterion~1 also characterizes the dependence between $P$ and the protected functionality $\tilde{F}$.

\begin{corollary}
\label{cor:latent-uncertainty}
Assume $\tilde{F}$ is prediction-relevant,
$\tilde{X}\perp\tilde{F}$,
$\tilde{X}\perp\tilde{F}\mid P$, and
$\tilde{Y}\perp P\mid(\tilde{X},\tilde{F})$.
If Criterion~1 holds, then $I(\tilde{F};P)=0$.
\end{corollary}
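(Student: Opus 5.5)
The plan is to obtain the corollary by instantiating Lemma~\ref{lemma:relevance-equal} with $(U,G,V)=(\tilde{X},\tilde{F},\tilde{Y})$ and the same side information $P$, and then using only the direction ``$\Leftarrow$'' of its equivalence.

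The first step is to check the lemma's hypotheses one by one against the corollary's assumptions.
\begin{itemize}
\item Prediction-relevance of $G=\tilde{F}$ for input $\tilde{X}$ and output $\tilde{Y}$ is assumed outright.
\item The two input-independence conditions $U\perp G$ and $U\perp G\mid P$ become $\tilde{X}\perp\tilde{F}$ and $\tilde{X}\perp\tilde{F}\mid P$, both of which are assumed.
\item The prediction-channel condition $V\perp P\mid(U,G)$ becomes $\tilde{Y}\perp P\mid(\tilde{X},\tilde{F})$, which is also assumed.
\end{itemize}
With all hypotheses met, the lemma gives $I(\tilde{F};P)=0 \Longleftrightarrow I(\tilde{Y};P\mid\tilde{X})=0$.

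The second step applies Criterion~1 (Definition~\ref{def:independence}). It states exactly $I(\tilde{Y};P\mid\tilde{X})=0$, so the right-hand side holds and the reverse implication yields $I(\tilde{F};P)=0$.

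For completeness I would briefly recall why this direction works, in case it is not taken as a black box. Because $\tilde{Y}\perp P\mid\tilde{X}$, for every $p$ with positive probability $\Pr(\tilde{y}\mid\tilde{x},p)=\Pr(\tilde{y}\mid\tilde{x})$. The two independence assumptions rewrite each side as a mixture $\sum_{f}\Pr(\tilde{y}\mid\tilde{x},f)\,\mu(f)$: on the left with $\mu=\Pr(\tilde{F}\mid P=p)$, on the right with $\mu=\Pr(\tilde{F})$. Injectivity of $\mu\mapsto\Pr_\mu(\cdot\mid\cdot)$ then forces $\Pr(\tilde{F}\mid P=p)=\Pr(\tilde{F})$ for every such $p$, which means $I(\tilde{F};P)=0$.

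The only delicate point is the rewriting of $\Pr(\tilde{y}\mid\tilde{x},p)$ as a mixture over $\Pr(\tilde{F}\mid p)$. It needs both $\tilde{X}\perp\tilde{F}\mid P$, so that $\Pr(f\mid\tilde{x},p)=\Pr(f\mid p)$, and the channel condition, so that $\Pr(\tilde{y}\mid\tilde{x},f,p)=\Pr(\tilde{y}\mid\tilde{x},f)$. This explains why the corollary states the latent-input assumptions explicitly rather than inheriting them from $X\perp(F,P)$: $\tilde{X}$ is a deterministic image of $X$ under public components and could in principle correlate with $P$. Apart from confirming that these assumptions are supplied, no step should present difficulty.
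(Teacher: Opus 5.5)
Your proposal is correct and matches the paper's proof exactly. The paper proves the corollary by instantiating Lemma~\ref{lemma:relevance-equal} with $(U,G,V)=(\tilde{X},\tilde{F},\tilde{Y})$ and using Criterion~1 in the reverse implication, and your recap of that direction mirrors the paper's own argument for the lemma.
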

\begin{proof}
Apply Lemma~\ref{lemma:relevance-equal} with
$(U,G,V)=(\tilde{X},\tilde{F},\tilde{Y})$.
\end{proof}

These assumptions reflect \methodname's public-resource construction. Raw inputs are sampled independently of the deployed task-specific functionality, and the frozen backbone and fixed projection are selected independently of that functionality. Applying this fixed public encoder to independently sampled inputs yields the latent-input independence conditions used above.

\begin{definition}[\textit{Feature-Space Shortcut}]
\label{def:feature-shortcut}
For nonzero $H(X)$ and $H(Y)$, define the normalized information losses
\[
\begin{aligned}
\ell_X&=1-\frac{I(X;\tilde{X})}{H(X)}
=\frac{H(X\mid\tilde{X})}{H(X)},\\
\ell_Y&=1-\frac{I(Y;\tilde{Y})}{H(Y)}
=\frac{H(Y\mid\tilde{Y})}{H(Y)}.
\end{aligned}
\]
The pair $(\ell_X,\ell_Y)$ measures feature-space shortcut severity: larger values indicate that the latent interface preserves less uncertainty from the original input or output. A selective final-layer interface primarily creates a large input loss $\ell_X$.
\end{definition}

Criterion~1 removes prediction-related artifacts from $P$, while the feature-space shortcut captures the complementary risk that a selective representation presents the adversary with a substantially reduced input space.

\begin{lemma}[\textit{Conditional Functionality Correspondence}]
\label{lemma:feature-shortcut}
Suppose $F$ and $\tilde{F}$ are conditionally functionally equivalent once $P$ is fixed, meaning
\[
H(F\mid P,\tilde{F})=0,
\qquad
H(\tilde{F}\mid P,F)=0.
\]
Then
\[
H(F\mid P)=H(\tilde{F}\mid P).
\]
If Criterion~1 and the conditions of Corollary~\ref{cor:latent-uncertainty} hold, then
\[
H(F\mid P)=H(\tilde{F}).
\]
\end{lemma}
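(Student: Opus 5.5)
The first claim follows from the chain rule for entropy, expanded in two ways. Conditioned on $P$, the joint entropy of the two functionalities can be split either way:
\[
H(F,\tilde{F}\mid P)=H(\tilde{F}\mid P)+H(F\mid P,\tilde{F})=H(F\mid P)+H(\tilde{F}\mid P,F).
\]
The conditional functional-equivalence hypotheses say the two cross terms $H(F\mid P,\tilde{F})$ and $H(\tilde{F}\mid P,F)$ both vanish. Equating the remaining terms gives $H(F\mid P)=H(\tilde{F}\mid P)$ directly.

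For the second claim, I rewrite $H(\tilde{F}\mid P)=H(\tilde{F})-I(\tilde{F};P)$. Under Criterion~1 and the latent-input assumptions of Corollary~\ref{cor:latent-uncertainty}, that corollary gives $I(\tilde{F};P)=0$. Hence $H(\tilde{F}\mid P)=H(\tilde{F})$, and combining this with the first claim yields $H(F\mid P)=H(\tilde{F})$.

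The only step needing care is the well-definedness of the entropies. $F$ and $\tilde{F}$ are random functionalities, so I treat them as discrete random variables with finite entropy over their supports, as Definition~\ref{def:target-model} implicitly does by summing over $\operatorname{supp}(G)$. With that, every subtraction above is legitimate and no term of the form $\infty-\infty$ arises. If one prefers to allow general supports, the same argument goes through with mutual information and conditional entropy defined via the usual limits, provided $H(\tilde{F})<\infty$. I expect this finiteness remark to be the main caveat to state explicitly; the rest is routine.
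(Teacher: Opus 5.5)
Your proof is correct and is essentially the paper's argument: expand $H(F,\tilde{F}\mid P)$ by the chain rule in both orders, drop the two vanishing cross terms, then use Corollary~\ref{cor:latent-uncertainty} to get $I(\tilde{F};P)=0$ and hence $H(\tilde{F}\mid P)=H(\tilde{F})$. Your finiteness caveat is harmless but not needed, because the discrete chain rule holds in $[0,\infty]$ without any subtraction, and independence of $\tilde{F}$ and $P$ gives $H(\tilde{F}\mid P)=H(\tilde{F})$ directly.
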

\begin{proof}
Expand $H(F,\tilde{F}\mid P)$ in both orders. Conditional functional equivalence gives
$H(F\mid P)=H(F,\tilde{F}\mid P)=H(\tilde{F}\mid P)$.
Criterion~1 and Corollary~\ref{cor:latent-uncertainty} give
$I(\tilde{F};P)=0$, and hence $H(\tilde{F}\mid P)=H(\tilde{F})$.
\end{proof}

\begin{definition}[\textit{Criterion 2: Information Preservation}]
\label{def:invariance}
Information preservation is satisfied for the task when $\ell_X=\ell_Y=0$, equivalently,
\[
I(X;\tilde{X})=H(X),\ I(Y;\tilde{Y})=H(Y).
\]
\end{definition}

\begin{assumption}[\textit{Lossless Representation--Functionality Consistency}]
\label{assump:representation-functionality}
If Criterion~2 holds, then the latent and original functionalities have equal uncertainty:
\[
H(\tilde{F})=H(F).
\]
\end{assumption}

This assumption captures the limiting behavior behind representation preservation. As the latent interface retains more input and output information, it distinguishes a broader range of prediction-relevant behaviors, so the uncertainty of $\tilde{F}$ is expected to approach that of $F$. Assumption~\ref{assump:representation-functionality} requires only the lossless endpoint of this intuition and leaves the rate of approach unrestricted.

\begin{theorem}[\textit{Ideal Public-Offloading Guarantee}]
\label{thm:uncertainty-lemma}
Assume the conditional functional-equivalence conditions of Lemma~\ref{lemma:feature-shortcut}, the conditions of Corollary~\ref{cor:latent-uncertainty}, Criteria~1--2, and Assumption~\ref{assump:representation-functionality}. Then
\[
H(F\mid P)=H(F).
\]
\end{theorem}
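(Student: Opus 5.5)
The plan is to chain the two earlier results. The first step invokes the second part of Lemma~\ref{lemma:feature-shortcut}. Its hypotheses are conditional functional equivalence of $F$ and $\tilde{F}$ given $P$, Criterion~1, and the latent-input conditions of Corollary~\ref{cor:latent-uncertainty}, all of which the theorem assumes. The lemma then yields
\[
H(F\mid P)=H(\tilde{F}).
\]
Unpacking this step: the chain rule applied to $H(F,\tilde{F}\mid P)$ in both orders gives $H(F\mid P)=H(\tilde{F}\mid P)$. Corollary~\ref{cor:latent-uncertainty}, obtained from Lemma~\ref{lemma:relevance-equal} with $(U,G,V)=(\tilde{X},\tilde{F},\tilde{Y})$, gives $I(\tilde{F};P)=0$, so that $H(\tilde{F}\mid P)=H(\tilde{F})$.

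The second step uses Criterion~2, i.e.\ $\ell_X=\ell_Y=0$, equivalently $I(X;\tilde{X})=H(X)$ and $I(Y;\tilde{Y})=H(Y)$. Under this condition Assumption~\ref{assump:representation-functionality} gives $H(\tilde{F})=H(F)$. Substituting into the identity from the first step yields
\[
H(F\mid P)=H(\tilde{F})=H(F),
\]
which is the claim. Equivalently, $I(F;P)=0$, so the public component carries no information about the prediction functionality.

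The argument is essentially bookkeeping, so the main thing to get right is that the hypotheses line up exactly. In particular, the equality $H(\tilde{F}\mid P)=H(\tilde{F})$ relies on the latent-input independence conditions $\tilde{X}\perp\tilde{F}$ and $\tilde{X}\perp\tilde{F}\mid P$, and these are needed to invoke Lemma~\ref{lemma:relevance-equal}. A further point is that all entropies must be finite, for instance on discrete or finite supports, so that the subtractions in the chain rule are valid. I would state this finiteness explicitly as a standing convention. No further estimates are needed, since the substantive content sits in Assumption~\ref{assump:representation-functionality} and Lemma~\ref{lemma:feature-shortcut}.
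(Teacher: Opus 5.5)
Your proposal is correct and matches the paper's proof: the second part of Lemma~\ref{lemma:feature-shortcut}, together with Criterion~1 and Corollary~\ref{cor:latent-uncertainty}, gives $H(F\mid P)=H(\tilde{F})$, and Criterion~2 with Assumption~\ref{assump:representation-functionality} gives $H(\tilde{F})=H(F)$. Your explicit finiteness convention for the entropies is a sensible addition that the paper leaves implicit.
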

\begin{proof}
Lemma~\ref{lemma:feature-shortcut} and Criterion~1 give
$H(F\mid P)=H(\tilde{F})$. Criterion~2 and Assumption~\ref{assump:representation-functionality} give $H(\tilde{F})=H(F)$. Combining the two equalities yields $H(F\mid P)=H(F)$.
\end{proof}

The theorem places ideal public-encoder offloading at the same prediction-functional uncertainty as protecting the entire pipeline: exposing the public encoder leaves the original uncertainty $H(F)$ unchanged. For comparison with a traditional end-to-end trained model $F_{\mathrm{e2e}}$, we model function families with comparable predictive capacity and task performance as having comparable functionality uncertainty, $H(F)\approx H(F_{\mathrm{e2e}})$. Prior results show that predictors built from pretrained intermediate representations can match or approach conventional end-to-end fine-tuning across diverse transfer settings~\cite{evciHead2ToeUtilizingIntermediate2022,tu2023VisualQueryTuning,wang2025UnderstandingDeepRepresentation}, supporting this reference choice. Thus, at the ideal limit, \methodname targets the functionality uncertainty of full-model protection while moving generic public computation outside FHE.

\noindent\textbf{Implications for \methodname.}
Criterion~1 calls for a frozen public backbone and fixed projection free of task-adapted artifacts. Criterion~2 calls for a broad multi-level input representation and a bijective centered-logit output mapping. The centered-logit mapping realizes $\ell_Y=0$ exactly. The multi-level projected representation reduces $\ell_X$, and estimated NPMI provides relative evidence of progress toward the lossless endpoint. Section~\ref{sec:gecko-method} develops these choices and the offline mutual-information diagnostic.

%% file: data/notations.tex
\begin{table}[htb]
\centering
\caption{Summary of Notation.}
\label{tab:notations}
\begin{tabular}{cc}
\toprule
\textbf{Symbol} & \textbf{Description} \\
\midrule
$X,\, Y$ & Task input and output\\
$F$ & End-to-end functionality $X\mapsto Y$\\
$\tilde{X},\, \tilde{Y}$ & Input representation and protected output\\
$\tilde{F}$ & Protected private component of functionality $\tilde{X}\mapsto\tilde{Y}$\\
$P$ & Exposed public component or side information\\
$H(\cdot),\,H(\cdot\mid \cdot)$ & Entropy and conditional entropy\\
$I(\cdot;\cdot),\,I(\cdot;\cdot\mid \cdot)$ & Mutual and conditional mutual information\\
$A\perp B\mid C$ & Conditional independence\\
$\ell_X,\,\ell_Y$ & Normalized input and output information losses\\
\bottomrule
\end{tabular}
\end{table}

%% file: sections/method.tex
\section{Gecko Method}\label{sec:gecko-method}

Guided by the design principles in Section~\ref{sec:criteria-formalization}, we now instantiate \methodname's three workflow stages.

\subsection{Stage 1: Public encoding}\label{subsec:public-encoder-method}

The first stage constructs a useful public representation without adapting the encoder to the server's private task or data. Task adaptation would make the exposed component deployment-specific and undermine the public-resource assumption.

A natural starting point is therefore an off-the-shelf pretrained backbone. Such models are publicly available, broadly reusable, and widely used as generic feature extractors in transfer learning~\cite{rw2019timm,zhuang2021ComprehensiveSurveyTransfer}. We keep this backbone frozen. This choice is important not merely for convenience, but for security: any fine-tuning or task-specific adaptation would make the public encoder reflect the protected deployment, causing the exposed component to carry information about the private task or data distribution. 

\noindent\textbf{Hierarchical feature aggregation.} One may worry that such a restriction makes the encoder too weak. In practice, however, modern pretrained backbones already provide high-quality reusable representations. Transfer-learning studies have shown that useful predictive information is not limited to the final layer; intermediate activations can also provide comparable or even stronger utility for downstream tasks~\cite{evciHead2ToeUtilizingIntermediate2022,tu2023VisualQueryTuning,wang2025UnderstandingDeepRepresentation}. This observation is particularly important: if we rely only on the deepest feature of the public backbone, we risk creating the feature-space shortcut discussed in Section~\ref{sec:why-fail}. 

To limit this shortcut, we aggregate multiple levels of the frozen backbone. Earlier layers retain local patterns and structure that deeper layers gradually discard, while later layers supply semantic features. Their combination exposes broader input information than the final representation alone.

This richer representation introduces a second challenge: raw multi-level features are too large to use directly. Concatenating several intermediate activation maps can produce millions of values for a single input, which would substantially increase communication and may force the encrypted predictor to become large enough to cancel the efficiency gain of offloading. We address this problem by compressing each selected feature map before concatenation. For each intermediate representation $E_i(X)$ from the $i$-th selected layer of the public backbone $E$, we apply a random projection and concatenate the projected features:
\begin{equation}
\label{eq:feat-proj}
\tilde{X}=\text{Concat}(M_1E_1(X),M_2E_2(X),\dots,M_LE_L(X)),
\end{equation}
where $L$ is the number of included layers and each $M_i$ projects the corresponding feature into a $D$-dimensional vector.

The Johnson--Lindenstrauss result motivates random projection because a sufficiently large projected dimension preserves pairwise geometry with high probability~\cite{binghamRandomProjectionDimensionality2001}. Here $D$ is an empirical richness--cost parameter: increasing it generally improves preservation but increases communication and encrypted computation. We implement each fixed, public projection with the Fastfood transform~\cite{le2013FastfoodApproximatingKernel}, a structured approximation that avoids storing and multiplying by a dense random matrix. Therefore, the projection is a compression mechanism, not a learned defense.

\noindent\textbf{Estimating preserved information.} After projection, another question remains: how do we assess whether the representation is becoming richer? There is no task-independent threshold that establishes the absence of a shortcut. We therefore use MI estimation~\cite{songUnderstandingLimitationsVariational2020} as a practical assessment of information preservation (Definition~\ref{def:invariance}). Specifically, we define the normalized preserved mutual information (NPMI) as
\begin{equation}
\mathrm{NPMI}=\frac{I(X;\tilde{X})}{I(X;\tilde{X}_{\text{lossless}})},
\label{eq:npmi}
\end{equation}
where $\tilde{X}$ is the exposed representation and $\tilde{X}_{\text{lossless}}$ is a reference encoding with $I(X;\tilde{X}_{\text{lossless}})=H(X)$. NPMI is estimated offline and interpreted relatively: a larger value indicates that the estimator attributes more retained input information to the representation. Reported values are capped at 1.0 relative to the reference estimate. A low estimate may prompt the designer to include more feature levels or increase $D$, subject to system cost. Figure~\ref{fig:feature-mi} shows how the estimate changes as intermediate levels are included. MI estimation serves as an offline design diagnostic rather than an inference-time operation.

\input{data/mi-layer-num}

Table~\ref{tab:NPMI-med} reports the estimated NPMI values. The headline system results use the 104k/88k configurations and yield the reported 0.4--2.2-second latency range. The larger 468k Caltech101 and 396k Organ configurations serve as offline richness diagnostics. Together, these settings illustrate how increasing representation richness selects a different operating point in the richness--cost tradeoff.

\input{data/nmi-est}

\noindent\textit{Implementation.} We obtain the pretrained backbones from Timm~\cite{rw2019timm}. For ResNet50v2, we select the outputs of every activation together with the stem, normalization, global-pooling, and classifier layers, yielding $L=52$ feature groups. For MobileNetV3, we select all ReLU, Hardsigmoid, and Hardswish outputs together with the stem, global-pooling, flatten, and classifier layers, yielding $L=50$ and $44$ groups for the Large and Small variants. Each group is projected to $D=2{,}000$ values in the main configurations, so $D$ is the \emph{per-layer} projection size and the total dimension is $L\times D$: 104{,}000 for ResNet50v2, 100{,}000 for MobileNetV3-Large, and 88{,}000 for MobileNetV3-Small. We adjust the included levels or $D$ offline using the relative NPMI estimate together with latency and communication costs.

\subsection{Stage 2: Encrypted proprietary inference}\label{subsec:latent-model}

After the public encoder produces the latent representation $\tilde{X}$, the server no longer needs to evaluate a large end-to-end network under FHE. Instead, it only evaluates a compact proprietary predictor over $\tilde{X}$. We refer to this protected predictor as the \emph{latent model}, since it operates in the representation space produced by the public encoder and outputs the latent result $\tilde{Y}$ for later reconstruction.

Pretrained intermediate representations can support the latent model with a much smaller or even linear classifier~\cite{evciHead2ToeUtilizingIntermediate2022,tu2023VisualQueryTuning,wang2025UnderstandingDeepRepresentation}, so the protected component need not relearn feature extraction.

\noindent\textbf{Hierarchical feature gating.} This design still faces an important training challenge. The latent representation $\tilde{X}$ is formed by concatenating projected activations from different backbone levels. These features may have substantially different magnitudes and statistical properties, as illustrated in Figure~\ref{fig:layer-norms}. If they are directly fed into the latent model, features with larger norms may dominate optimization even when they are not the most useful for the downstream task, while informative but smaller-scale features may be underused. This imbalance can make training unstable and may also increase overfitting, especially when the target task depends unevenly on different abstraction levels.

\input{data/layer_norms}

To address this issue, we introduce a lightweight \emph{Hierarchical Feature Gating} (HFG) layer at the input of the latent model. Let $E'_i(X)=M_iE_i(X)$ denote the projected feature from the $i$-th selected layer, as defined in Equation~\ref{eq:feat-proj}. HFG rescales the contribution of each selected layer before concatenation:
\begin{equation}
\text{HFG}(\tilde{X})=\text{Concat}(w_1E'_1(X),\dots,w_LE'_L(X)),
\end{equation}
where $w_1,\dots,w_L$ are learned nonnegative gating weights inside the proprietary predictor. In the released implementation, each gate is obtained independently from a trainable parameter $q_i$:
\begin{equation}
w_i=\operatorname{sigmoid}(q_i).
\end{equation}
This mechanism gives the latent model an explicit way to balance features from different abstraction levels. Rather than forcing the following fully connected layer to compensate implicitly for scale differences, HFG rescales each feature group. The predictor can therefore emphasize useful levels and suppress noisy ones. Section~\ref{subsec:hfg-ablation} evaluates its effect on convergence and overfitting.

HFG improves training without increasing the cryptographic cost of inference. Once training is finished, the learned gating operation is only a fixed linear rescaling of the input feature groups. Therefore, it can be absorbed into the first fully connected layer of the latent model:
\begin{equation}
W_{\text{new}}=W_{\text{old}}W_{\text{HFG}},
\end{equation}
where $W_{\text{HFG}}$ is the matrix form of the learned gating transformation. After this absorption, encrypted inference does not need an additional FHE operation for HFG.  

\noindent\textit{FHE instantiation.} The protected computation uses pure FHE and a single interaction round. Following prior work~\cite{brutzkusLowLatencyPrivacy2019}, we densely pack the representation into ciphertext slots. The headline one-layer predictor is linear after HFG fusion; for supported multilayer predictors, nonlinear layers use a degree-2 polynomial approximation of ReLU~\cite{hesamifardCryptoDLDeepNeural2017}. The lightweight fully connected computation is parallelized across output channels.

\subsection{Stage 3: Output mapping}\label{subsec:output-mapping}

The intended API output is the class-probability vector, which the user is allowed to recover. Returning raw logits would reveal one unnecessary degree of freedom: for any constant $c$, $\operatorname{softmax}(z)=\operatorname{softmax}(z+c)$. The output interface therefore removes this common offset and reveals exactly the information needed to reconstruct the intended probabilities.

\noindent\textbf{Bijective logit mapping.} To avoid this, we re-center the latent logits before returning them. Let $\text{logits}$ denote the output of the latent model over $C$ classes. We compute
\begin{equation}
\tilde{Y}_i=\text{logits}_i-\frac{1}{C}\sum_{j=1}^{C}\text{logits}_j.
\end{equation}
This fixed, parameter-free operation removes the softmax-invariant offset and leaves the relative logit differences. The centered logits are in one-to-one correspondence with the class-probability vector, i.e., $Y=\text{softmax}(\tilde{Y})
\Leftrightarrow
\tilde{Y}=\log Y-\frac{1}{C}\sum_{j=1}^{C}\log Y_j$. Thus the user intentionally recovers $Y$, but receives no additional absolute-logit offset. The same mapping is used for every query and does not change prediction correctness.

Together, these stages move a \textit{task-agnostic}, feature-rich public encoder outside encrypted computation while retaining only the compact proprietary predictor under FHE.

%% file: data/mi-layer-num.tex
\begin{figure}[bt]
\centering
\begin{tikzpicture}
  \begin{axis}[
      name=MIaxis,
      width=0.8\columnwidth,
      height=4.0cm,
      xlabel={Number of Included Layers},
      ylabel={MI (nats)},
      ymax=18,
      symbolic x coords={1,2,4,8,16,32,50},
      xtick=data,
      legend pos=south east,
      grid=both,
      grid style={line width=.1pt, draw=gray!10},
      major grid style={line width=.2pt, draw=gray!50},
  ]
    \addlegendimage{mark=x,red,dashed,thick}
    \addlegendentry{NPMI}
    \addplot[
      color=red,
      mark=x,
      thick,
    ]
    coordinates {
      (1,  6.6349961222434530)
      (2,  7.2765701303676690)
      (4,  11.680898974541904)
      (8,  12.847130804645772)
      (16, 14.261099607766079)
      (32, 14.918647136817986)
      (50, 15.576738454857649)
    };
    \addlegendentry{MI}
    
   \addplot[
      dashed,
      black,
      thick,
    ]
    coordinates {(1,16.5851) (50,16.5851)};
    \addlegendentry{Ideal}

  \end{axis}
  \begin{axis}[
      at={(MIaxis.south west)},
      anchor=south west,
      width=0.8\columnwidth,
      height=4.0cm,
      xlabel={},
      ylabel={NPMI},
      y label style={at={(axis description cs:1.05,.5)},anchor=south},
      ymin=0, ymax=1.119,
      symbolic x coords={1,2,4,8,16,32,50},
      xtick=data,
      axis x line=none,
      axis y line=right,
      legend style={at={(0.5,-0.35)},anchor=north,legend columns=1}
  ]
    \addplot[
      dashed,
      mark=x,
      red,
      thick,
    ]
    coordinates {
      (1, 0.4000576494711188)
      (2, 0.4387414082741538)
      (4, 0.7043007865217517)
      (8, 0.7746188328466981)
      (16, 0.8598742008047029)
      (32, 0.8995210843961137)
      (50, 0.9392007557902966)
    };
  \end{axis}
\end{tikzpicture}
\caption{The preserved information for CIFAR10 when including features from different numbers of MobileNetV3 layers.}
\label{fig:feature-mi}
\end{figure}

%% file: data/nmi-est.tex
\begin{table}[htb]
\centering
\caption{Estimated NPMI across latent dimensions.}
\label{tab:NPMI-med}

\resizebox{\linewidth}{!}{
\begin{tabular}{ccccccccc}
\toprule
Task & Pets & MNIST & EuroSAT & CIFAR100 & FSDD & \multicolumn{3}{c}{Caltech101} \\
\midrule
Dimension & 104k & 104k & 104k & 104k & 104k & 104k & 208k & 468k \\
$\mathrm{NPMI}$ & 0.95 & 1.0 & 1.0 & 0.99 & 0.96 & 0.73 & 0.90 & 0.97 \\
\end{tabular}
}
\resizebox{\linewidth}{!}{
\begin{tabular}{ccccccccc}
\toprule
Task & Blood & Path. & Breast & Skin & PN & \multicolumn{3}{c}{Organ} \\
\midrule
Dimension & 88k & 88k & 88k & 88k & 88k & 88k & 352k & 396k \\
$\mathrm{NPMI}$ & 1.0 & 0.93 & 0.95 & 1.0 & 1.0 & 0.88 & 0.93 & 0.99 \\
\bottomrule
\end{tabular}
}
\end{table}

%% file: data/layer_norms.tex
\begin{figure}[bt]
\centering
\begin{tikzpicture}
  \begin{axis}[
      width=0.9\columnwidth,
      height=3.7cm,
      xlabel={Index of Included Layers},
      ylabel={L-2 Norm},
      ymax=320,
      xtick={1, 10, 20, 30, 40, 50},
      legend pos=north west,
      grid=both,
      grid style={line width=.1pt, draw=gray!10},
      major grid style={line width=.2pt, draw=gray!50},
  ]
  
    \addplot[
      color=red,
      dashed,
      thick,
    ]
    coordinates {
        (1,73.46071)
        (2,39.203716)
        (3,114.656845)
        (4,91.42586)
        (5,48.808697)
        (6,78.51054)
        (7,89.88577)
        (8,68.922226)
        (9,39.322006)
        (10,68.684456)
        (11,38.926563)
        (12,65.80317)
        (13,69.06015)
        (14,48.483955)
        (15,60.416687)
        (16,87.75376)
        (17,43.34796)
        (18,59.851665)
        (19,85.01636)
        (20,36.480286)
        (21,36.62792)
        (22,47.420322)
        (23,52.99015)
        (24,60.53)
        (25,52.52801)
        (26,49.07523)
        (27,49.13526)
        (28,59.08541)
        (29,40.7994)
        (30,47.5363)
        (31,71.88845)
        (32,31.124262)
        (33,38.27204)
        (34,76.21674)
        (35,30.898918)
        (36,37.703175)
        (37,90.67243)
        (38,30.90218)
        (39,26.123487)
        (40,25.699663)
        (41,23.896372)
        (42,41.338905)
        (43,33.874435)
        (44,24.702753)
        (45,13.176276)
        (46,40.29451)
        (47,6.9098835)
        (48,3.285964)
        (49,313.31427)
        (50,106.77102)
        (51,137.9023)
        (52,34.914757)
    };
    \addlegendentry{Original}
    
    \addplot[
      color=blue,
      thick,
    ]
    coordinates {
        (1,11.320293)
        (2,20.907827)
        (3,30.396406)
        (4,62.526787)
        (5,33.30708)
        (6,41.79992)
        (7,16.201462)
        (8,30.416967)
        (9,13.50847)
        (10,8.256589)
        (11,19.084265)
        (12,41.389366)
        (13,26.379505)
        (14,7.128356)
        (15,12.849989)
        (16,53.740692)
        (17,7.3695583)
        (18,21.383251)
        (19,30.195293)
        (20,30.883871)
        (21,8.442646)
        (22,6.4531856)
        (23,41.985733)
        (24,27.094893)
        (25,16.252378)
        (26,40.331104)
        (27,29.424002)
        (28,29.829824)
        (29,21.467281)
        (30,37.956463)
        (31,17.625546)
        (32,17.499035)
        (33,25.394014)
        (34,52.024384)
        (35,11.463617)
        (36,3.3670547)
        (37,85.61391)
        (38,8.518342)
        (39,7.127787)
        (40,16.160234)
        (41,11.4106455)
        (42,23.30609)
        (43,13.624376)
        (44,11.279297)
        (45,7.8937483)
        (46,34.075848)
        (47,2.2641864)
        (48,0.4389036)
        (49,16.452078)
        (50,30.809614)
        (51,31.884787)
        (52,4.141429)
    };
    \addlegendentry{with HFG}
  \end{axis}
\end{tikzpicture}
\caption{Representation norms across layers before and after applying the HFG layer.}
\label{fig:layer-norms}
\end{figure}

%% file: sections/exps.tex
\section{Experiments}\label{sec:experiments}

We evaluate \methodname's efficiency and utility across realistic tasks, then separately assess its incremental extraction risk under concrete component-reuse attacks. Our implementation is available at \url{https://github.com/CassiniHuy/gecko-infer}.

\subsection{Experiment Setup}\label{subsec:exp-setup}

\noindent\textbf{FHE Scheme and Parameter Setting.} We use the FHE library Lattigo~\cite{mouchetLattigoMultipartyHomomorphic2020} with the CKKS scheme~\cite{ckks17asiacrypt}, the conjugate-invariant variant~\cite{kimApproximateHomomorphicEncryption2018} with a polynomial degree of $N=8192$, a total ciphertext modulus of approximately $2^{218}$, and a plaintext scale of $2^{30}$. These parameters ensure at least 128-bit security~\cite{albrechtHomomorphicEncryptionStandard2021}. The baseline approaches~\cite{kimOptimizedPrivacyPreservingCNN2023,usenix22cheetah,he2024RhombusFastHomomorphic} use their original parameter settings under the same security level.

\noindent\textbf{Hardware Environment.} All experiments are conducted on a Linux machine with dual 2.10\,GHz CPUs and 128\,GB of memory. We use a CPU-only environment for inference-time evaluation to ensure fair overhead comparisons, although GPUs could further accelerate the public encoder. Reported latency numbers are averaged over 16 test runs.

\noindent\textbf{Neural Network Architectures.} ResNet50v2~\cite{he2016identity} serves as the default public backbone in \methodname. We also evaluate MobileNetV3~\cite{howard2019searching} as a compact alternative designed for resource-constrained devices. This combination allows us to study both a strong, widely used pretrained backbone and a lightweight practical one. All pretrained models are obtained from the Timm library~\cite{rw2019timm}.

\noindent\textbf{Tasks and Datasets.} To evaluate \methodname in realistic settings, we use a diverse set of classification tasks spanning natural images, biomedical images, and audio spectrograms~\cite{krizhevsky2009learning,deng2012mnist,parkhi2012cats,lecun1998gradient,helber2019eurosat,zohar_jackson_2018_1342401,medmnistv2}. These tasks cover input resolutions up to $224\times224$ and up to 101 classes, and include privacy-sensitive biomedical applications. For each dataset, we follow the original train/test split when available. We fine-tune for 100 epochs with learning rates ranging from 0.0001 to 0.01 and report the final test accuracy.

\noindent\textbf{Metrics.} We evaluate the following dimensions:
\begin{itemize}[leftmargin=*]
    \item \textit{Accuracy}: accuracy of the private inference service.
    \item \textit{Accuracy (Baseline)}: accuracy of a standard transfer-learning baseline that trains a classifier on the final layer of the same public backbone under the same settings.
    \item \textit{Crypto. MACs}: number of multiply-accumulate operations (MACs) performed under cryptographic protection.
    \item \textit{Reduced MACs}: number of protected MACs reduced by \methodname relative to the corresponding baseline for the same task, backbone, and input size.
    \item \textit{Communication}: total data exchanged between the client and server per inference, without pre-computation.
    \item \textit{End-to-End Time}: total latency per private inference, from the raw input to the final output, without precomputation.
\end{itemize}

Our evaluation is designed to reflect realistic use cases rather than only small benchmark settings. Besides CIFAR-like standard datasets, we include higher-resolution image tasks such as Caltech101, as well as biomedical image applications where private inference is especially meaningful. 

\subsection{Performance Comparison to Baselines}\label{subsec:compare-to-baseline}

We begin with CIFAR10 because it is the most widely used benchmark in prior private-inference studies, which enables the most direct comparison with existing baselines. We then extend the evaluation to more realistic real-world tasks in the following subsections.

\noindent\textbf{Baseline Setup.} This comparison measures how much encrypted computation can be removed while protecting proprietary functionality. The end-to-end baselines protect a complete ResNet~\cite{he2016deep}, including its generic feature extractor. In contrast, \methodname moves only an independently available, frozen backbone outside FHE; all task-specific parameters and the proprietary input--output mapping remain in the protected predictor. Thus, although \methodname cryptographically protects fewer operations, it protects the same type of proprietary functionality, maintains comparable task accuracy, and provides comparable extraction robustness under the evaluated attacks in Section~\ref{subsec:sec-eval}. For smaller ResNets, we use an optimized pure-FHE baseline~\cite{kimOptimizedPrivacyPreservingCNN2023}, which supports models up to ResNet20. For ResNet50 with $224\times224$ inputs, we use the hybrid Cheetah~\cite{usenix22cheetah} and Rhombus~\cite{he2024RhombusFastHomomorphic} results in their reported LAN setting. \methodname uses comparable ResNet18 and ResNet50v2 public backbones.

Naive final-layer offloading can achieve similar or lower overhead, but it exposes the selective representation consumed by its private head. At matched representation dimensions, our extraction experiments find this shortcut substantially easier to imitate than \methodname for query budgets of 200--5{,}000 (Figure~\ref{fig:attack-deep-rep}). It therefore trades model-side robustness for efficiency and is evaluated in the extraction ablation rather than included as a performance baseline here.

\input{data/comparison}

\noindent\textbf{Results.} Table~\ref{tab:comparison} shows that moving independently public computation outside FHE reduces cryptographic cost while maintaining comparable task accuracy. Crypto. MACs fall from 41.32M/4.13G in the end-to-end baselines to 38.01K/1.04M in the ResNet18/ResNet50v2 \methodname configurations. Corresponding end-to-end latency is 0.33\,s and 0.99\,s, versus hundreds of seconds for the complete protected networks. This advantage is complementary to lower-level cryptographic optimizations~\cite{he2024RhombusFastHomomorphic,juNeuJeansPrivateNeural2024}.

\subsection{Generalization Across Tasks and Backbones}\label{subsec:eval-diverse-tasks}

\noindent\textbf{Image tasks with different backbones.} To show that \methodname is not tied to a single benchmark or backbone, we evaluate it on a diverse set of image classification tasks with input resolutions up to $224\times224$ and up to 101 classes, including Pets~\cite{parkhi2012cats}, MNIST~\cite{lecun1998gradient}, EuroSAT~\cite{helber2019eurosat}, CIFAR100~\cite{krizhevsky2009learning}, and Caltech101~\cite{fei2004learning}. We instantiate \methodname with two widely used public backbones: ResNet50v2 and MobileNetV3~\cite{howard2019searching}. MobileNetV3 is included to evaluate whether the public encoder can remain practical even in resource-constrained settings, since it contains only 5.48M parameters, substantially fewer than ResNet50v2's 25.56M.

\input{data/diverse-tasks}

\noindent\textbf{Results.} Table~\ref{tab:diverse-tasks} shows that \methodname consistently maintains accuracy close to the corresponding transfer-learning baselines across all evaluated tasks, while greatly reducing the amount of protected computation. Across these image tasks, end-to-end private inference completes in 0.78--2.20 seconds with communication of at most 10.8\,MB. The results also show that \methodname is robust to the backbone choice: while ResNet50v2 delivers the strongest default performance, the compact MobileNetV3 backbone remains highly competitive, indicating that \methodname does not rely on a single large public encoder to be effective.

\noindent\textbf{From image to audio.} To further test whether \methodname is limited to vision backbones and tasks, we evaluate it on the FSDD audio task~\cite{zohar_jackson_2018_1342401}. We consider two public encoders: an ImageNet-pretrained ResNet50v2 and Google's AudioSet-pretrained YAMNet~\cite{yamnet,gemmeke2017AudioSetOntology}, which uses a MobileNetV1 backbone with only 3.73M parameters. \methodname achieves 96.33\% accuracy with 0.62\,s latency using YAMNet, and still reaches 96.76\% accuracy with 0.97\,s latency even when the public encoder is an image-pretrained ResNet50v2. These results suggest that \methodname can work with readily available off-the-shelf encoders beyond a single modality.

\noindent\textbf{Biomedical case study.} We next evaluate \methodname on six biomedical image processing tasks~\cite{medmnistv2} covering different biomedical image types and modalities at resolution $128\times128$. This setting is particularly relevant because the inputs are privacy-sensitive and realistic private inference is therefore especially valuable. To keep the public component lightweight, we use MobileNetV3-Small~\cite{rw2019timm} as the public backbone, which has only 1.59M parameters, and use a single fully connected layer as the latent model.

\input{data/medical}

Table~\ref{tab:mobilenet-medical} shows that \methodname remains accurate and efficient in these biomedical settings. Across all six tasks, its accuracy stays within 1\% of the corresponding baseline, while the encrypted computation remains below 1M Crypto.\ MACs and the total latency is at most 604\,ms. These results show that \methodname generalizes across various realistic tasks and privacy-sensitive biomedical applications, while remaining effective with both strong and compact public backbones.

\subsection{Security Evaluation: Extraction Risk Analysis}
\label{subsec:sec-eval}

We now ask whether \methodname's public components improve extraction over ordinary black-box access. This is an incremental-risk question, not a claim that \methodname eliminates model extraction.

\noindent\textbf{Attack setup.} On Caltech101, the model-extraction adversary queries valid ImageNet inputs~\cite{russakovsky2015ImageNetLargeScale} and distills the returned class-probability vectors, following Knockoff Nets~\cite{orekondy2019knockoff}. Every surrogate uses the predictor architecture of its corresponding target; \methodname and its component-reuse surrogates use the same one-layer private head. Surrogates train for 30 epochs under the same query budgets shown in Figure~\ref{fig:attack-compare}. We report surrogate accuracy on the true task labels because it measures attacker utility.

\textit{(a) Comparison to the baseline model.} This experiment asks whether \methodname itself becomes an easier extraction target than a standard baseline. We compare two target models: a baseline model trained by standard transfer learning with a ResNet50v2 backbone, whose task accuracy is 94.54\%, and a \methodname target using the same ResNet50v2 backbone as the public encoder, whose task accuracy is 95.48\%. For both targets, the attacker uses the same pretrained backbone architecture as the target model, initializes it with ImageNet-pretrained weights, and trains the whole surrogate model. This setting evaluates whether replacing the baseline model with \methodname increases the attacker's extraction success under the same public-resource assumption.

\textit{(b) Public-component reuse attacks.} This experiment fixes \methodname as the target and progressively reuses its public components. The \textit{free-tuning attack} initializes the same pretrained backbone architecture and fine-tunes the complete surrogate. The \textit{backbone-reuse attack} instead reuses and freezes \methodname's exposed backbone, training only the downstream predictor. The \textit{encoder-reuse attack} reuses and freezes the complete public encoder, including the exact projection. These variants test whether direct reuse helps under the evaluated optimization strategies.

\input{data/attack-comparison}

\noindent\textbf{Results.} Figure~\ref{fig:attack-compare-baseline} shows lower surrogate task accuracy against \methodname than against the transfer-learning target at every tested budget. At 6400 queries, the values are 85.71\% and 93.11\%, respectively, while \methodname itself has slightly higher target accuracy. The 85.71\% result is substantial ordinary extraction risk, not a security threshold; the relevant observation is that replacing the baseline target with \methodname did not increase attacker utility in this experiment.

Figure~\ref{fig:attack-reuse} shows 85.71\%, 83.38\%, and 55.81\% for free tuning, backbone reuse, and encoder reuse at 6{,}400 queries. Free tuning achieves the highest accuracy at every budget from 200 onward because it can optimize the complete surrogate, whereas the reuse variants freeze the exposed components. Across the tested budgets, component reuse therefore offers no consistent extraction advantage.

\subsection{Security Ablations for Public Offloading}
\label{subsec:sec-ablation}

We next examine two risks highlighted by our formalization: exposing prediction-related artifacts through the public component, and creating a feature-space shortcut through an overly selective representation. We study both risks under practical model-extraction settings with query access to the target model and publicly available substitute data.

\noindent\textbf{Prediction-artifact independence ablation.}
We first examine whether exposing additional prediction-related artifacts can help an attacker through two experiments:

\begin{itemize}[leftmargin=*]
    \item \textit{Exp. 1 (Increasing Component Reuse)}: We test whether the public encoder used by \methodname leaks exploitable information. The attacker builds the surrogate model under three settings with increasing levels of reuse: (1) \textit{same \methodname design}, where the attacker follows the same public-encoder design but uses a different backbone and projection; (2) \textit{+same backbone}, where the attacker further uses the same backbone as the target but reinitializes the projection parameters; and (3) \textit{+same projection}, where the attacker also uses the same projection as the \methodname target, i.e., exactly the same public encoder exposed. If the exposed encoder contains exploitable information, increasing the overlap with the target encoder should improve extraction performance; otherwise, the results should remain similar across these settings.
    \item \textit{Exp. 2 (Partial-weight exposure)}: we construct an insecure baseline that exposes incomplete but informative model weights. Following prior work~\cite{renPrivDNNSecureMultiParty2024}, we protect only a subset of neurons in the first two layers and expose the remaining model. The protected neurons are selected by importance under structured pruning~\cite{anwar2017structured}. Note that the exposed partial model is no longer independently usable for accurate inference.
\end{itemize}

Figure~\ref{fig:security-ablations} reports the results on CIFAR10 using STL10~\cite{coates2011analysis} as the substitute query set. Figure~\ref{fig:attack-Independence-gecko} shows no consistent accuracy increase as the attacker reuses more of the public encoder, indicating that it provides little exploitable deployment-specific information in this setting. In contrast, Figure~\ref{fig:attack-Independence-leakage} shows that exposing informative model weights substantially helps extraction: at 5{,}000 queries, partial exposure raises attack accuracy by approximately 43--44 percentage points over full protection.

\noindent\textbf{Feature richness ablation.} We next examine whether a lack of feature richness leads to the feature-space shortcut problem, which makes the proprietary model easier to imitate.

\begin{itemize}[leftmargin=*]
    \item \textit{Exp. 3 (Naive representation)}: we replace \methodname's multi-level latent representation with a naive deep representation built only from the exposed backbone's final features, following prior encoder-splitting designs~\cite{brutzkusLowLatencyPrivacy2019,ruanPrivateEfficientAccurate2023}. The representation dimension is kept at the same level as that of \methodname, so the main difference is the amount of preserved input information rather than the raw feature size.
    \item \textit{Exp. 4 (Varying feature richness)}: we study whether richer public representations make the remaining private predictor harder to extract. We vary the latent dimension to control how much input information is preserved in the \methodname public representation, and use the estimated input--representation MI as a proxy for feature richness. To avoid underfitting due to the increased feature dimension, we ensure every surrogate model has converged.
\end{itemize}

Figure~\ref{fig:attack-deep-rep} is our most controlled shortcut comparison: at similar total representation dimensions (about 100k), the naive final-layer target yields 19--27 percentage points higher surrogate task accuracy for query budgets of 200--5{,}000. This is consistent with the selective-interface risk. In Figure~\ref{fig:attack-invariance-mi}, the largest tested representation has higher estimated MI and 26--29 percentage points lower surrogate accuracy than the smallest representation.

\input{data/attack-sec-ablation}

\subsection{Representation Design Ablation}\label{subsec:repre-ablation}

Figure~\ref{fig:metrics-dim} shows the effect of varying the \emph{total} latent dimension from 2{,}000 to 104{,}000 with the public backbone fixed. Accuracy is nearly stable from about 20{,}000 onward, whereas estimated MI rises from 4.01 to 11.78 nats. Latency and communication concurrently rise from about 0.3\,s to 1.0\,s and from 1.3\,MB to 10.8\,MB. This is distinct from the main configuration's $D=2{,}000$ \emph{per-layer} projection size, which produces 104{,}000 total values for ResNet50v2.

\input{data/metrics-dim}

\subsection{Client-side Overhead Analysis}\label{subsec:client-overhead-analysis}

Figure~\ref{fig:client-cost-dim} reports the client-side overhead introduced by \methodname. Since the output mapping is negligible, the main practical cost is executing the public encoder. Under our CPU-only setting, client-side processing remains below 0.5\,s, with memory usage ranging from roughly 940\,MiB to 1{,}368\,MiB across the evaluated latent dimensions. Thus, the client overhead remains practical while increasing in a predictable way with richer exposed representations. This overhead can be further reduced through GPU acceleration, lightweight public backbones such as MobileNet variants, and standard compression techniques such as quantization~\cite{wu2016quantized}. It can also be offloaded to the server by protocols that protect only user privacy for public encoding.

\input{data/client-cost}

\subsection{HFG Ablation}\label{subsec:hfg-ablation}

Figure~\ref{fig:eval-hfg} compares latent-model training with and without HFG while holding all other settings fixed. On CIFAR10, HFG produces a smoother test-accuracy trajectory and raises the final test accuracy from 90.10\% to 92.67\%. The smaller Pets dataset exposes the overfitting effect more clearly: without HFG, training accuracy reaches 100\% while test accuracy remains around 70\%; with HFG, test accuracy exceeds 90\%. These results show that balancing feature groups improves optimization stability and generalization.

\input{data/hfg}

%% file: data/comparison.tex
\begin{table}[htb]
\centering
\caption{CIFAR10 efficiency comparison. \methodname moves only an independently public generic encoder outside FHE while protecting all task-specific proprietary functionality and maintaining comparable accuracy. Reduced MACs are measured against a complete network matching each \methodname backbone; the displayed columns report representative prior private-inference systems. Naive final-layer offloading offers similar efficiency but is substantially easier to extract, so we evaluate it separately in Section~\ref{subsec:sec-ablation}.}
\label{tab:comparison}
\resizebox{\linewidth}{!}{
\begin{tabular}{lccccc}
\toprule
 & \multicolumn{2}{c}{\methodname} & \multicolumn{3}{c}{End-to-End Baseline}\\
\midrule
Crypto. Scheme & \multicolumn{2}{c}{Pure-FHE (CKKS)} & Pure-FHE~\cite{kimOptimizedPrivacyPreservingCNN2023}  &  Hybrid~\cite{usenix22cheetah} & Hybrid~\cite{he2024RhombusFastHomomorphic} \\
Architecture & ResNet18  &  ResNet50v2 &  ResNet20  & \multicolumn{2}{c}{ResNet50}\\
\midrule
Accuracy    &   89.48\%    & 93.51\%    & 90.39\%    & 93.62\%  & 93.62\%     \\
Crypto. MACs &   38.01K   & 1.04M     & 41.32M     & 4.13G    & 4.13G      \\
Reduced MACs &   37.22M   & 4.09G     & N/A     & N/A   & N/A      \\
Communication &  4.45\,MB    & 10.8\,MB & 3.15\,MB   & 1{,}815\,MB   & 1{,}152\,MB     \\
End-to-End Time &  \textbf{0.33\,s} & \textbf{0.99\,s} & 644\,s     & 188\,s  & 145\,s      \\
\bottomrule
\end{tabular}
}
\end{table}

%% file: data/diverse-tasks.tex
\begin{table}[htb]
\centering
\caption{Performance of \methodname on various tasks.}
\label{tab:diverse-tasks}

\resizebox{\linewidth}{!}{
\begin{tabular}{lccccc}
\toprule
Task& MNIST & EuroSAT & CIFAR100 & Pets  & Caltech101 \\
\midrule
Resolution & 28 & 64 & 32 & 224 & 224 \\
Categories & 10 & 10 & 100 & 37 & 101 \\
\midrule
\midrule
 \multicolumn{6}{c}{ResNet50v2 as Backbone} \\
\midrule
Accuracy             & 96.55\%  & 95.03\%  & 78.41\%   & 90.19\%   & 95.48\%    \\
Accuracy (Baseline)  & 96.53\%  & 94.85\%  & 76.67\%   & 90.92\%   & 94.54\%    \\
\midrule
Crypto. MACs         & 1.04M    & 1.04M    & 10.00M     & 3.85M    & 10.50M     \\
Reduced MACs          & 79.16M    & 333.65M    & 83.42M     & 4.09G   & 4.09G     \\
Communication        & 10.8\,MB & 10.8\,MB & 10.8\,MB  & 10.8\,MB  & 10.8\,MB \\
End-to-End Time      & 0.98\,s  & 0.97\,s  & 2.17\,s    & 1.28\,s  & 2.20\,s  \\
\midrule
\midrule
 \multicolumn{6}{c}{MobileNetV3 as Backbone} \\
\midrule
Accuracy             & 97.27\%  & 94.41\%    & 76.08\%     & 89.83\%  & 95.02\%    \\
Accuracy (Baseline) & 97.85\%  & 94.30\%  & 75.61\%   & 88.23\%   & 95.04\%    \\
\midrule
Crypto. MACs           & 1.00M    & 1.00M      & 10.00M    & 3.70M    & 10.10M     \\
Reduced MACs        & 6.85M    & 20.09M    & 7.08M       & 215.36M   & 215.36M     \\
Communication        & 10.8\,MB & 10.8\,MB & 10.8\,MB  & 10.8\,MB  & 10.8\,MB \\
End-to-End Time       & 0.78\,s  & 0.78\,s    & 2.00\,s   & 1.09\,s   & 2.01\,s  \\
\bottomrule
\end{tabular}
}
\end{table}

%% file: data/medical.tex
\begin{table}[htb]
\centering
\caption{Performance of \methodname on biomedical tasks.}
\label{tab:mobilenet-medical}

\resizebox{\linewidth}{!}{
\begin{tabular}{lcccccc}
\toprule
Task                 & Blood    & Organ    & Path.    & Breast     & Skin     & PN\\
\midrule
Modality             & Micro.   & CT       & Hist.    & US         & Derm.    & X-Ray\\
Resolution           & 128        & 128       & 128        & 128          & 128        & 128\\
Categories           & 8        & 11       & 9        & 2          & 7        & 2\\
\midrule
Accuracy             & 95.73\%  & 76.28\%  & 88.48\%  & 87.18\%    & 78.75\%  & 87.66\% \\
Accuracy (Baseline)             & 94.94\%  & 75.94\%  & 88.30\%  & 87.82\%    & 78.75\%    & 88.62\%    \\
\midrule
Crypto. MACs         & 704K     & 968K     & 792K     & 176K       & 616K     & 176K \\
Reduced MACs         & 7.85M     & 7.85M     & 7.85M     & 7.85M       & 7.85M     & 7.85M \\
Communication        & 9.2\,MB  & 9.2\,MB  & 9.2\,MB  & 9.2\,MB    & 9.2\,MB  & 9.2\,MB \\
End-to-End Time      & 575\,ms  & 604\,ms  & 507\,ms  & 413\,ms    & 555\,ms  & 429\,ms \\
\bottomrule
\end{tabular}
}
\end{table}

%% file: data/attack-comparison.tex
\begin{figure}[tb]
\centering
\subfloat[Comparison to baseline model\label{fig:attack-compare-baseline}]{%
\begin{minipage}[b]{0.48\columnwidth}
\centering
\begin{tikzpicture}
\begin{axis}[
    height=3.5cm,
    width=0.96\linewidth,
    xlabel={Query Budget},
    ylabel={Attack Accuracy},
    symbolic x coords={100,200,400,800,1600,3200,6400},
    ymin=0, ymax=1.0,
    legend style={at={(0.5,1.7)},anchor=north,legend columns=1,font=\small},
    grid=both,
    grid style={line width=.1pt, draw=gray!10},
    major grid style={line width=.2pt, draw=gray!50},
]

\addplot[mark=diamond*,red,thick] coordinates {
(100,0.2250)
(200,0.5505)
(400,0.6413)
(800,0.7858)
(1600,0.8469)
(3200,0.8730)
(6400,0.9311)
};
\addlegendentry{Baseline as target}

\addplot[mark=triangle*,green,thick] coordinates {
(100,0.1007)
(200,0.2891)
(400,0.4712)
(800,0.5081)
(1600,0.6770)
(3200,0.8181)
(6400,0.8571)
};
\addlegendentry{\methodname as target}

\end{axis}
\end{tikzpicture}
\end{minipage}}%
\hfill
\subfloat[Public-component reuse\label{fig:attack-reuse}]{%
\begin{minipage}[b]{0.48\columnwidth}
\centering
\begin{tikzpicture}
\begin{axis}[
    height=3.5cm,
    width=0.96\linewidth,
    xlabel={Query Budget},
    symbolic x coords={100,200,400,800,1600,3200,6400},
    ymin=0, ymax=1.0,
    legend style={at={(0.5,1.8)},anchor=north,legend columns=1,font=\small},
    grid=both,
    grid style={line width=.1pt, draw=gray!10},
    major grid style={line width=.2pt, draw=gray!50},
]

\addplot[mark=square*,red,thick] coordinates {
(100,0.1007)
(200,0.2891)
(400,0.4712)
(800,0.5081)
(1600,0.6770)
(3200,0.8181)
(6400,0.8571)
};
\addlegendentry{Free tuning}

\addplot[mark=diamond*,pink,thick] coordinates {
(100,0.1243)
(200,0.1180)
(400,0.2181)
(800,0.4290)
(1600,0.5300)
(3200,0.7540)
(6400,0.8338)
};
\addlegendentry{Backbone reuse}

\addplot[mark=triangle*,green,thick] coordinates {
(100,0.0917)
(200,0.1270)
(400,0.1846)
(800,0.2467)
(1600,0.3499)
(3200,0.4555)
(6400,0.5581)
};
\addlegendentry{Encoder reuse}

\end{axis}
\end{tikzpicture}
\end{minipage}}%
\caption{Surrogate task accuracy under the evaluated extraction settings. Target accuracies are 94.54\% (baseline) and 95.48\% (\methodname). (a) The same transfer-learning attack has lower accuracy against \methodname. (b) Backbone reuse freezes the public backbone; encoder reuse additionally freezes the exact projection. Reuse provides no consistent advantage over free end-to-end surrogate tuning and is less effective from 200 queries onward.}
\label{fig:attack-compare}
\end{figure}

%% file: data/attack-sec-ablation.tex
\begin{figure}[tb]
\centering
\subfloat[Exp. 1: Component Reuse\label{fig:attack-Independence-gecko}]{%
\begin{minipage}[b]{0.48\columnwidth}
\centering
\begin{tikzpicture}
\begin{axis}[
    width=0.94\linewidth,
    height=3.5cm,
    xlabel={Query Budget},
    ylabel={Attack Accuracy},
    symbolic x coords={100,200,400,800,1600,3200,5000},
    ymin=0.1, ymax=0.9,
    legend style={at={(0.5,1.8)},anchor=north,legend columns=1,font=\small},
    grid=both,
    grid style={line width=.1pt, draw=gray!10},
    major grid style={line width=.2pt, draw=gray!50},
]
\addplot[mark=triangle*,green,thick] coordinates {
    (100, 0.18979999423027039)
    (200, 0.32649999856948850)
    (400, 0.37540000677108765)
    (800, 0.47749999165534973)
    (1600,0.47569999098777770)
    (3200,0.55699998140335080)
    (5000,0.63980001211166380)
};
\addlegendentry{Same \methodname design}

\addplot[mark=square*,blue,thick] coordinates {
    (100, 0.29660001397132874)
    (200, 0.34970000386238100)
    (400, 0.44510000944137573)
    (800, 0.48809999227523804)
    (1600,0.45159998536109924)
    (3200,0.54240000247955320)
    (5000,0.64420002698898320)
};
\addlegendentry{+Same backbone}

\addplot[mark=*,red,thick] coordinates {
    (100, 0.2867)
    (200, 0.2802)
    (400, 0.4228)
    (800, 0.4911)
    (1600,0.4663)
    (3200,0.5520)
    (5000,0.5827)
};
\addlegendentry{+Same projection}
\end{axis}
\end{tikzpicture}
\end{minipage}}%
\hfill
\subfloat[Exp.2: Partial-weight exposure\label{fig:attack-Independence-leakage}]{%
\begin{minipage}[b]{0.48\columnwidth}
\centering
\begin{tikzpicture}
\begin{axis}[
    width=0.94\linewidth,
    height=3.5cm,
    xlabel={Query Budget},
    ylabel={Attack Accuracy},
    symbolic x coords={100,200,400,800,1600,3200,5000},
    ymin=0.1, ymax=0.95,
    legend style={at={(0.5,1.55)},anchor=north,legend columns=2,font=\small},
    grid=both,
    grid style={line width=.1pt, draw=gray!10},
    major grid style={line width=.2pt, draw=gray!50},
]
\addplot[mark=triangle*,green,thick] coordinates {
    (100, 0.11110000312328339)
    (200, 0.24639999866485596)
    (400, 0.26230001449584960)
    (800, 0.29859998822212220)
    (1600,0.35220000147819520)
    (3200,0.42429998517036440)
    (5000,0.46039998531341550)
};
\addlegendentry{100\%}

\addplot[mark=*,pink,thick] coordinates {
    (100, 0.6328999996185303)
    (200, 0.7644000053405762)
    (400, 0.7799999713897705)
    (800, 0.8064000010490417)
    (1600,0.8392999768257141)
    (3200,0.8748999834060669)
    (5000,0.8869000077247620)
};
\addlegendentry{75\%}

\addplot[mark=*,orange,thick] coordinates {
    (100, 0.6978999972343445)
    (200, 0.7591000199317932)
    (400, 0.7727000117301941)
    (800, 0.8219000101089478)
    (1600,0.8547000288963318)
    (3200,0.8756999969482422)
    (5000,0.8866000175476074)
};
\addlegendentry{50\%}

\addplot[mark=square*,red,thick] coordinates {
    (100, 0.6758000254631042)
    (200, 0.7463999986648560)
    (400, 0.8148999810218811)
    (800, 0.8488000035285950)
    (1600,0.8669000267982483)
    (3200,0.8866999745368958)
    (5000,0.9011999964714050)
};
\addlegendentry{25\%}
\end{axis}
\end{tikzpicture}
\end{minipage}}%

\par\vspace{0.6em}

\subfloat[Exp. 3: Naive representation\label{fig:attack-deep-rep}]{%
\begin{minipage}[b]{0.48\columnwidth}
\centering
\begin{tikzpicture}
\begin{axis}[
    width=0.94\linewidth,
    height=3.5cm,
    xlabel={Query Budget},
    ylabel={Attack Accuracy},
    symbolic x coords={100,200,400,800,1600,3200,5000},
    ymin=0.1, ymax=0.95,
    legend style={at={(0.3,1.45)},anchor=north,legend columns=2,font=\small},
    grid=both,
    grid style={line width=.1pt, draw=gray!10},
    major grid style={line width=.2pt, draw=gray!50},
]
\addplot[mark=triangle*,green,thick] coordinates {
    (100, 0.2867000102996826)
    (200, 0.2867)
    (400, 0.4228000044822693)
    (800, 0.491100013256073)
    (1600,0.46630001068115234)
    (3200,0.5519999861717224)
    (5000,0.5583999752998352)
};
\addlegendentry{\methodname}

\addplot[mark=square*,red,thick] coordinates {
    (100, 0.3231000006198883)
    (200, 0.5537999868392944)
    (400, 0.6855999827384949)
    (800, 0.7050999999046326)
    (1600,0.7013999819755554)
    (3200,0.7433000206947327)
    (5000,0.7781999707221985)
};
\addlegendentry{Deep Rep.}
\end{axis}
\end{tikzpicture}
\end{minipage}}%
\hfill
\subfloat[Exp. 4: Feature Richness\label{fig:attack-invariance-mi}]{%
\begin{minipage}[b]{0.48\columnwidth}
\centering
\makebox[\linewidth][c]{%
\begin{tikzpicture}
\begin{axis}[
    name=MIaxisB,
    width=0.94\linewidth,
    height=3.5cm,
    xlabel={MI},
    ymin=0.45, ymax=0.9,
    xtick={4, 6, 8, 10, 12},
    legend style={at={(0.5,1.65)},anchor=north,legend columns=2,font=\small},
    grid=both,
    grid style={line width=.1pt, draw=gray!10},
    major grid style={line width=.2pt, draw=gray!50},
]
\addplot[mark=square*,blue,thick] coordinates {
    (4.01080,0.7508)
    (6.92210,0.6816)
    (9.25430,0.5802)
    (11.7751,0.4911)
};
\addlegendentry{$|Q|$=800}

\addplot[mark=triangle*,green,thick] coordinates {
    (4.01080,0.8400)
    (6.92210,0.7849)
    (9.25430,0.7048)
    (11.7751,0.5520)
};
\addlegendentry{$|Q|$=3200}

\addplot[mark=triangle*,orange,thick] coordinates {
    (4.01080,0.8531)
    (6.92210,0.8323)
    (9.25430,0.7774)
    (11.7751,0.5584)
};
\addlegendentry{$|Q|$=5000}

\addlegendimage{mark=x,black,dashed,thick}
\addlegendentry{NPMI}
\end{axis}

\begin{axis}[
    at={(MIaxisB.south west)},
    anchor=south west,
    width=0.94\linewidth,
    height=3.5cm,
    xlabel={},
    ylabel={NPMI},
    ymin=0.2, ymax=1.0,
    xtick={4, 6, 8, 10, 12},
    ytick={0.2,0.6,1.0},
    yticklabels={0.2,0.6,1},
    axis x line=none,
    axis y line=right,
    ylabel style={
        at={(axis description cs:1.18,0.85)},
        anchor=south,
    },
]
\addplot[
    dashed,
    mark=x,
    black,
    thick,
] coordinates {
    (4.01080,0.30316865211344257)
    (6.92210,0.5232282155167202)
    (9.25430,0.699514724557054)
    (11.7751,0.8900571445848703)
};
\end{axis}
\end{tikzpicture}
}%
\end{minipage}}%

\caption{Extraction ablations. (a) Increasing public-component reuse produces no consistent improvement. (b) Exposing more private-model information raises surrogate accuracy; percentages denote protected neurons. (c) At similar representation dimensions, naive final-layer offloading is easier to imitate than \methodname. (d) Higher estimated representation richness corresponds to lower attack accuracy.}
\label{fig:security-ablations}
\end{figure}

%% file: data/metrics-dim.tex
\begin{figure}[tb]
\centering
\subfloat[Accuracy\label{fig:acc-dim}]{%
\begin{minipage}[b]{0.48\columnwidth}
\centering
\begin{tikzpicture}
  \begin{axis}[
      ylabel={Accuracy},
      xmin=0, xmax=110000,
      ymin=0.8, ymax=1,
      xtick={20000,60000,100000},
      width=\linewidth,
      height=3cm,
      grid=both,
      grid style={line width=.1pt, draw=gray!10},
      major grid style={line width=.2pt, draw=gray!50},
  ]
  \addplot[mark=*,blue,thick] coordinates {
      (2000,0.8656)
      (20000,0.9275)
      (40000,0.9343)
      (60000,0.9346)
      (80000,0.9365)
      (104000,0.9351)
  };
  \end{axis}
\end{tikzpicture}
\end{minipage}}%
\hfill
\subfloat[Mutual Information\label{fig:mi-dim}]{%
\begin{minipage}[b]{0.48\columnwidth}
\centering
\begin{tikzpicture}
  \begin{axis}[
      ylabel={MI (nats)},
      xmin=0, xmax=110000,
      ymin=0, ymax=15,
      xtick={20000,60000,100000},
      width=\linewidth,
      height=3cm,
      grid=both,
      grid style={line width=.1pt, draw=gray!10},
      major grid style={line width=.2pt, draw=gray!50},
  ]
  \addplot[mark=square*,red,thick] coordinates {
      (2000,4.0108)
      (20000,6.9221)
      (40000,9.2543)
      (60000,9.7436)
      (80000,10.2082)
      (104000,11.7751)
  };
  \end{axis}
\end{tikzpicture}
\end{minipage}}%

\par\vspace{0.6em}

\subfloat[Time\label{fig:time-dim}]{%
\begin{minipage}[b]{0.48\columnwidth}
\centering
\begin{tikzpicture}
  \begin{axis}[
      ylabel={Total Time (s)},
      xmin=0, xmax=110000,
      ymin=0, ymax=2.0,
      xtick={20000,60000,100000},
      width=\linewidth,
      height=3cm,
      legend pos=north west,
      grid=both,
      grid style={line width=.1pt, draw=gray!10},
      major grid style={line width=.2pt, draw=gray!50},
  ]
  \addplot[mark=square,blue,thick] coordinates {
    (2000  ,0.2973)
    (20000 ,0.3925)
    (40000 ,0.5004)
    (60000 ,0.6738)
    (80000 ,0.8705)
    (104000,0.9903)
  };
  \end{axis}
\end{tikzpicture}
\end{minipage}}%
\hfill
\subfloat[Communication\label{fig:commu-dim}]{%
\begin{minipage}[b]{0.48\columnwidth}
\centering
\begin{tikzpicture}
  \begin{axis}[
      ylabel={Commu. (MB)},
      xmin=0, xmax=110000,
      ymin=0, ymax=15,
      xtick={20000,60000,100000},
      width=\linewidth,
      height=3cm,
      legend pos=north east,
      grid=both,
      grid style={line width=.1pt, draw=gray!10},
      major grid style={line width=.2pt, draw=gray!50},
  ]
  \addplot[mark=diamond,red,thick] coordinates {
    (2000  ,1.31)
    (20000 ,2.89)
    (40000 ,4.46)
    (60000 ,6.82)
    (80000 ,8.39)
    (104000,10.8)
  };
  \end{axis}
\end{tikzpicture}
\end{minipage}}%

\caption{Representation richness--cost tradeoff as total latent dimension increases. (a) Task accuracy, (b) estimated input--representation MI, (c) CPU end-to-end latency, and (d) communication. Accuracy stabilizes before estimated MI, while system cost grows with representation size.}
\label{fig:metrics-dim}
\end{figure}

%% file: data/client-cost.tex
\begin{figure}[tb]
\centering
\subfloat[Time\label{fig:client-time-dim}]{%
\begin{minipage}[b]{0.48\columnwidth}
\centering
\begin{tikzpicture}
  \begin{axis}[
      ylabel={Time (s)},
      xmin=0, xmax=110000,
      ymin=0, ymax=1,
      xtick={20000,60000,100000},
      width=0.96\linewidth,
      height=3cm,
      legend pos=north west,
      grid=both,
      grid style={line width=.1pt, draw=gray!10},
      major grid style={line width=.2pt, draw=gray!50},
  ]
  \addplot[mark=o,blue,thick] coordinates {
      (2000  ,0.0503)
      (20000 ,0.0945)
      (40000 ,0.1644)
      (60000 ,0.2558)
      (80000 ,0.3725)
      (104000,0.4903)
  };
  \end{axis}
\end{tikzpicture}
\end{minipage}}%
\hfill
\subfloat[Memory\label{fig:client-mem-dim}]{%
\begin{minipage}[b]{0.48\columnwidth}
\centering
\begin{tikzpicture}
  \begin{axis}[
      ylabel={Mem. (MiB)},
      xmin=0, xmax=110000,
      ymin=800, ymax=1500,
      xtick={20000,60000,100000},
      width=0.96\linewidth,
      height=3cm,
      legend pos=north east,
      grid=both,
      grid style={line width=.1pt, draw=gray!10},
      major grid style={line width=.2pt, draw=gray!50},
  ]
  \addplot[mark=triangle,red,thick] coordinates {
      (2000  ,940.0869)
      (20000 ,959.4424)
      (40000 ,1021.2615)
      (60000 ,1073.1465)
      (80000 ,1141.8411)
      (104000,1368.3979)
  };
  \end{axis}
\end{tikzpicture}
\end{minipage}}%
\caption{Client overhead of executing the public encoder locally at 32-bit precision as total latent dimension increases: (a) CPU time and (b) memory usage.}
\label{fig:client-cost-dim}
\end{figure}

%% file: data/hfg.tex
\begin{figure}[tb]
\centering
\subfloat[CIFAR10 w.o. HFG\label{fig:hfg-cifar10-non-hfg}]{%
\begin{minipage}[b]{0.48\columnwidth}
\centering
\begin{tikzpicture}
  \begin{axis}[
      ylabel={Accuracy},
      xmin=0, xmax=50,
      ymin=0.4, ymax=1,
      xtick={10, 25, 40},
      width=\linewidth,
      height=3cm,
      grid=both,
      legend pos=south east,
      grid style={line width=.1pt, draw=gray!10},
      major grid style={line width=.2pt, draw=gray!50},
  ]
  \addplot[dashed, red,thick] coordinates {
  (1, 0.4349)
(2, 0.7864)
(3, 0.7621)
(4, 0.8001)
(5, 0.8104)
(6, 0.8352)
(7, 0.8369)
(8, 0.8431)
(9, 0.8131)
(10, 0.849)
(11, 0.8594)
(12, 0.8498)
(13, 0.8571)
(14, 0.8657)
(15, 0.8585)
(16, 0.8733)
(17, 0.8604)
(18, 0.8699)
(19, 0.8699)
(20, 0.8433)
(21, 0.8865)
(22, 0.8579)
(23, 0.8688)
(24, 0.8741)
(25, 0.8838)
(26, 0.8853)
(27, 0.8882)
(28, 0.874)
(29, 0.8924)
(30, 0.8907)
(31, 0.8898)
(32, 0.894)
(33, 0.886)
(34, 0.9046)
(35, 0.8944)
(36, 0.9049)
(37, 0.8958)
(38, 0.9035)
(39, 0.9067)
(40, 0.8949)
(41, 0.9047)
(42, 0.9072)
(43, 0.9057)
(44, 0.91)
(45, 0.9155)
(46, 0.9124)
(47, 0.9098)
(48, 0.9101)
(49, 0.9164)
(50, 0.9226)
  };
  \addplot[blue,thick] coordinates {
 (1, 0.7582)
(2, 0.7533)
(3, 0.8066)
(4, 0.8352)
(5, 0.8336)
(6, 0.8542)
(7, 0.8275)
(8, 0.8774)
(9, 0.8715)
(10, 0.8462)
(11, 0.7936)
(12, 0.8819)
(13, 0.7925)
(14, 0.8778)
(15, 0.8578)
(16, 0.8063)
(17, 0.8391)
(18, 0.8535)
(19, 0.8743)
(20, 0.8766)
(21, 0.8252)
(22, 0.8741)
(23, 0.8761)
(24, 0.8671)
(25, 0.8905)
(26, 0.9099)
(27, 0.8826)
(28, 0.8646)
(29, 0.8875)
(30, 0.8874)
(31, 0.9022)
(32, 0.878)
(33, 0.8788)
(34, 0.8409)
(35, 0.9085)
(36, 0.8854)
(37, 0.8904)
(38, 0.9186)
(39, 0.8796)
(40, 0.8937)
(41, 0.8807)
(42, 0.9062)
(43, 0.9095)
(44, 0.8913)
(45, 0.9036)
(46, 0.9051)
(47, 0.9074)
(48, 0.9163)
(49, 0.9162)
(50, 0.901)
  };
  \end{axis}
\end{tikzpicture}
\end{minipage}}%
\hfill
\subfloat[CIFAR10 with HFG\label{fig:hfg-cifar10-hfg}]{%
\begin{minipage}[b]{0.48\columnwidth}
\centering
\begin{tikzpicture}
  \begin{axis}[
      ylabel={Accuracy},
      xmin=0, xmax=50,
      ymin=0.4, ymax=1,
      xtick={10, 25, 40},
      width=\linewidth,
      legend pos=south east,
      height=3cm,
      grid=both,
      grid style={line width=.1pt, draw=gray!10},
      major grid style={line width=.2pt, draw=gray!50},
  ]
  \addplot[dashed, red,thick] coordinates {
(1, 0.5868)
(2, 0.808)
(3, 0.8306)
(4, 0.846)
(5, 0.8583)
(6, 0.8608)
(7, 0.8622)
(8, 0.8679)
(9, 0.8711)
(10, 0.8708)
(11, 0.8789)
(12, 0.88)
(13, 0.8784)
(14, 0.8792)
(15, 0.8819)
(16, 0.8841)
(17, 0.8835)
(18, 0.8849)
(19, 0.8908)
(20, 0.8915)
(21, 0.8936)
(22, 0.8933)
(23, 0.8958)
(24, 0.8942)
(25, 0.8973)
(26, 0.8992)
(27, 0.8984)
(28, 0.8985)
(29, 0.9007)
(30, 0.904)
(31, 0.9035)
(32, 0.9045)
(33, 0.9008)
(34, 0.9049)
(35, 0.9031)
(36, 0.9033)
(37, 0.909)
(38, 0.9086)
(39, 0.9093)
(40, 0.9104)
(41, 0.9078)
(42, 0.912)
(43, 0.9121)
(44, 0.9119)
(45, 0.9132)
(46, 0.9104)
(47, 0.9128)
(48, 0.9151)
(49, 0.9157)
(50, 0.9154)
  };
  \addlegendentry{train}
  \addplot[blue,thick] coordinates {
(1, 0.8377)
(2, 0.8523)
(3, 0.8757)
(4, 0.8834)
(5, 0.8887)
(6, 0.8915)
(7, 0.8934)
(8, 0.8861)
(9, 0.8999)
(10, 0.9032)
(11, 0.9035)
(12, 0.9198)
(13, 0.9186)
(14, 0.8928)
(15, 0.8928)
(16, 0.897)
(17, 0.9128)
(18, 0.9139)
(19, 0.9209)
(20, 0.9206)
(21, 0.9041)
(22, 0.9215)
(23, 0.9144)
(24, 0.9076)
(25, 0.9109)
(26, 0.9037)
(27, 0.9251)
(28, 0.922)
(29, 0.9156)
(30, 0.9251)
(31, 0.9125)
(32, 0.9179)
(33, 0.9247)
(34, 0.9147)
(35, 0.9223)
(36, 0.9257)
(37, 0.9281)
(38, 0.9212)
(39, 0.9256)
(40, 0.9256)
(41, 0.9263)
(42, 0.9102)
(43, 0.93)
(44, 0.9276)
(45, 0.9199)
(46, 0.9247)
(47, 0.9312)
(48, 0.9324)
(49, 0.9269)
(50, 0.9267)
  };
  \addlegendentry{test}
  \end{axis}
\end{tikzpicture}
\end{minipage}}%

\par\vspace{0.6em}

\subfloat[Pets w.o. HFG\label{fig:hfg-pets-no-hfg}]{%
\begin{minipage}[b]{0.48\columnwidth}
\centering
\begin{tikzpicture}
  \begin{axis}[
      ylabel={Accuracy},
      xmin=0, xmax=50,
      ymin=0, ymax=1,
      xtick={10, 25, 40},
      width=\linewidth,
      height=3cm,
      legend pos=south east,
      grid=both,
      grid style={line width=.1pt, draw=gray!10},
      major grid style={line width=.2pt, draw=gray!50},
  ]
  \addplot[dashed, red,thick] coordinates {
(1, 0.0511)
(2, 0.0603)
(3, 0.0519)
(4, 0.1038)
(5, 0.1696)
(6, 0.3103)
(7, 0.4946)
(8, 0.7307)
(9, 0.8704)
(10, 0.9312)
(11, 0.9633)
(12, 0.9736)
(13, 0.9861)
(14, 0.9921)
(15, 0.9959)
(16, 0.997)
(17, 0.9978)
(18, 0.9995)
(19, 0.9997)
(20, 0.9997)
(21, 1.0)
(22, 1.0)
(23, 1.0)
(24, 1.0)
(25, 1.0)
(26, 1.0)
(27, 1.0)
(28, 1.0)
(29, 1.0)
(30, 1.0)
(31, 1.0)
(32, 1.0)
(33, 1.0)
(34, 1.0)
(35, 1.0)
(36, 1.0)
(37, 1.0)
(38, 1.0)
(39, 1.0)
(40, 1.0)
(41, 1.0)
(42, 1.0)
(43, 1.0)
(44, 1.0)
(45, 1.0)
(46, 1.0)
(47, 1.0)
(48, 1.0)
(49, 1.0)
(50, 1.0)
  };
  \addplot[blue,thick] coordinates {
(1, 0.0886)
(2, 0.0406)
(3, 0.0714)
(4, 0.0804)
(5, 0.1398)
(6, 0.2071)
(7, 0.4328)
(8, 0.5715)
(9, 0.6299)
(10, 0.637)
(11, 0.6907)
(12, 0.6751)
(13, 0.6931)
(14, 0.7146)
(15, 0.7105)
(16, 0.7067)
(17, 0.7005)
(18, 0.7007)
(19, 0.7075)
(20, 0.7067)
(21, 0.7046)
(22, 0.7048)
(23, 0.7046)
(24, 0.7046)
(25, 0.7046)
(26, 0.7048)
(27, 0.7048)
(28, 0.7048)
(29, 0.7048)
(30, 0.7048)
(31, 0.7048)
(32, 0.7048)
(33, 0.7048)
(34, 0.7048)
(35, 0.7048)
(36, 0.7048)
(37, 0.7048)
(38, 0.7048)
(39, 0.7048)
(40, 0.7048)
(41, 0.7048)
(42, 0.7048)
(43, 0.7048)
(44, 0.7048)
(45, 0.7048)
(46, 0.7048)
(47, 0.7048)
(48, 0.7048)
(49, 0.7048)
(50, 0.7048)
  };
  \end{axis}
\end{tikzpicture}
\end{minipage}}%
\hfill
\subfloat[Pets with HFG\label{fig:hfg-pets-hfg}]{%
\begin{minipage}[b]{0.48\columnwidth}
\centering
\begin{tikzpicture}
  \begin{axis}[
      ylabel={Accuracy},
      xmin=0, xmax=50,
      ymin=0, ymax=1.0,
      xtick={10, 25, 40},
      width=\linewidth,
      height=3cm,
      legend pos=south east,
      grid=both,
      grid style={line width=.1pt, draw=gray!10},
      major grid style={line width=.2pt, draw=gray!50},
  ]
  \addplot[dashed, red,thick] coordinates {
(1, 0.0712)
(2, 0.1549)
(3, 0.4155)
(4, 0.6989)
(5, 0.8916)
(6, 0.9462)
(7, 0.9663)
(8, 0.9728)
(9, 0.9812)
(10, 0.9837)
(11, 0.9878)
(12, 0.9899)
(13, 0.991)
(14, 0.9916)
(15, 0.9921)
(16, 0.9924)
(17, 0.994)
(18, 0.9948)
(19, 0.9967)
(20, 0.9978)
(21, 0.9976)
(22, 0.9981)
(23, 0.9978)
(24, 0.9989)
(25, 0.9992)
(26, 0.9992)
(27, 0.9992)
(28, 0.9992)
(29, 0.9995)
(30, 0.9997)
(31, 0.9997)
(32, 0.9997)
(33, 0.9997)
(34, 0.9997)
(35, 1.0)
(36, 1.0)
(37, 1.0)
(38, 1.0)
(39, 1.0)
(40, 1.0)
(41, 1.0)
(42, 1.0)
(43, 1.0)
(44, 1.0)
(45, 1.0)
(46, 1.0)
(47, 1.0)
(48, 1.0)
(49, 1.0)
(50, 1.0)
  };
  \addlegendentry{train}
  \addplot[blue,thick] coordinates {
(1, 0.0758)
(2, 0.2685)
(3, 0.4843)
(4, 0.713)
(5, 0.8373)
(6, 0.8681)
(7, 0.8749)
(8, 0.8817)
(9, 0.8874)
(10, 0.8847)
(11, 0.8885)
(12, 0.8915)
(13, 0.8929)
(14, 0.8951)
(15, 0.8937)
(16, 0.8951)
(17, 0.8951)
(18, 0.897)
(19, 0.8972)
(20, 0.8986)
(21, 0.8972)
(22, 0.8983)
(23, 0.8986)
(24, 0.8986)
(25, 0.8997)
(26, 0.9002)
(27, 0.8997)
(28, 0.9011)
(29, 0.9002)
(30, 0.8994)
(31, 0.9011)
(32, 0.9002)
(33, 0.9008)
(34, 0.9008)
(35, 0.9013)
(36, 0.9019)
(37, 0.9013)
(38, 0.9013)
(39, 0.9011)
(40, 0.9008)
(41, 0.9013)
(42, 0.9011)
(43, 0.9005)
(44, 0.9016)
(45, 0.9013)
(46, 0.9008)
(47, 0.9011)
(48, 0.9008)
(49, 0.9016)
(50, 0.9019)
  };
  \addlegendentry{test}
  \end{axis}
\end{tikzpicture}
\end{minipage}}%

\caption{HFG ablation on CIFAR10 and Pets over 50 training epochs. Dashed red curves show training accuracy and solid blue curves show test accuracy. HFG stabilizes convergence and reduces overfitting on the smaller Pets dataset.}
\label{fig:eval-hfg}
\end{figure}

%% file: sections/related.tex
\section{Related Work}

\noindent\textbf{Cryptographic Optimizations.} Pure-FHE inference protects the complete model~\cite{cryptoNets16icml}, including its architecture when circuit-private schemes are used~\cite{bourse2016FHECircuitPrivacy,gentry2009thesis}. Prior optimizations improve ciphertext packing and reduce rotations~\cite{brutzkusLowLatencyPrivacy2019,usenix22cheetah,kimOptimizedPrivacyPreservingCNN2023,singhHyenaBalancingPacking2024,he2024RhombusFastHomomorphic}, or approximate nonlinear activations for arithmetic FHE schemes~\cite{bfv12eprint,ckks17asiacrypt,zimermanConvertingTransformersPolynomial2024,nam2025SLOTHELazyApproximation,aoAutoFHEAutomatedAdaption2024a,diaaFastPrivateInference2024}. Hybrid protocols reduce latency but require depth-dependent interaction and high communication~\cite{xu2024PrivCirNetEfficientPrivate,xu2025BreakingLayerBarrier}; precomputation-based systems reduce online cost but retain substantial per-input work~\cite{ngGForceGPUFriendlyOblivious2021,feng2025PantherPracticalSecure,fu2025SwiftFastSecure,liu2025AntelopeFastSecure}. \methodname changes the system partition and remains compatible with these optimizations.

\noindent\textbf{Heuristic Approaches.} Other systems protect only a small predictor over pretrained features~\cite{brutzkusLowLatencyPrivacy2019,ruanPrivateEfficientAccurate2023,xia2026CryptPEFTEfficientPrivate}, selected neurons~\cite{renPrivDNNSecureMultiParty2024}, or selected layers~\cite{zhangNoPrivacyLeft2024,shen2022SOTERGuardingBlackbox}. Related acceleration techniques also expose model details without analyzing their leakage~\cite{chenTHEXPrivacyPreservingTransformer2022,yan2025CometAcceleratingPrivate,scope2025ccs}. As Section~\ref{sec:why-fail} shows, assuming that an exposed partial network is unusable can weaken model-side security; \methodname instead makes the public/private split explicit and evaluates its extraction risk.

%% file: sections/discon.tex
\section{Discussion and Conclusion}\label{sec:dis-con}

\methodname demonstrates the efficiency potential of treating an independently public encoder differently from a proprietary task-specific predictor. Its formalization identifies information-theoretic conditions that preserve uncertainty about proprietary functionality, and the extraction evaluation shows robustness comparable to ordinary black-box access when attackers reuse the exposed components. Together, these results support public-encoder offloading as a practical route to reducing encrypted computation while retaining model-side protection.

The evaluated system covers CNN-based classification encoders across image and audio applications. Extending \methodname to transformer and generative models introduces richer output spaces, potentially larger private predictors, and new challenges in assessing representation richness. These settings provide an important direction for future work. More broadly, efficient private inference requires both cryptographic optimization and an explicit separation between independently public resources and proprietary functionality.